\pdfoutput=1
\documentclass{article}

\usepackage{iclr2027_conference,times}
\iclrfinalcopy

\usepackage{amsmath,amsfonts,bm}

\def\eqref#1{equation~\ref{#1}}

\def\1{\bm{1}}

\def\eps{{\epsilon}}

\DeclareMathAlphabet{\mathsfit}{\encodingdefault}{\sfdefault}{m}{sl}
\SetMathAlphabet{\mathsfit}{bold}{\encodingdefault}{\sfdefault}{bx}{n}

\newcommand{\R}{\mathbb{R}}

\usepackage{hyperref}
\usepackage{url}
\usepackage{booktabs}
\usepackage{amsmath,amssymb,amsthm}
\usepackage{graphicx}
\usepackage{array}
\usepackage{xcolor}

\newtheorem{theorem}{Theorem}
\newtheorem{proposition}{Proposition}

\theoremstyle{remark}

\newcommand{\nz}{\mathcal{I}}
\newcommand{\IA}{I_{\mathrm{A}}}
\newcommand{\Ct}{C_t}

\title{CausalWAM: Learning the Causes of Actions in World Action Models}

\author{Juntao Cheng$^{1}$, Jinzhou Tang$^{2}$, Fan Feng$^{2}$, Henrik I.~Christensen$^{2}$, Yujia Zheng$^{1}$\\
\normalfont $^{1}$UIUC\quad $^{2}$UCSD}

\begin{document}
\maketitle

\begin{abstract}

World action models (WAMs) learn latent variables from observations and use these to both predict the evolution of the scene and to generate actions. The prediction has to be of the whole scene, whereas only a portion of the scene causes actions. WAMs fitted to demonstrations do not, in general, guarantee that they will generate actions solely from the latent causes of the demonstrated actions (the action-relevant latent variables), as the model could just as well learn to imitate in this way whilst entangling the action-relevant latent variables with the nuisance variables. We therefore separate out what causes action from what prediction depends on. We prove that under a sparsity constraint the latent variables used to generate actions are an invertible function of the action-relevant latent variables alone and the true structure is identified up to a permutation of the latent variables. This requires neither sufficient temporal dependence nor any assumptions on how the unobserved regimes evolve and holds even when the action-relevant latent variables are statistically dependent on the rest. We therefore propose CWAM (Causal WAM), which incorporates the constraint into a pretrained WAM by adding a sparse routing path to its action head, with the world modeling objective remaining unchanged. Experiments on both synthetic data and real-world robots validate the theory as well as showcase the improved efficiency in closed-loop success.

\end{abstract}

\section{Introduction}

The action a robot performs when stacking a single block atop another depends on the position of both blocks and the fact that the gripper is currently holding the uppermost block. The camera image's pixels will change significantly if the robot removes the tablecloth or clears away the clutter located behind the table; however, this will have no effect upon the motion itself. WAMs (World Action Models) like Fast-WAM \citep{yuan2026fastwam}, and Cosmos~3 \citep{nvidia2026cosmos3} are able to extract latent variables from images and can be used to generate actions from those latent variables, while also being capable of generating images of potential subsequent frames. Since prediction has to account for the whole scene, these latent variables also describe the tablecloth and the clutter. We refer to the latent variables that cause the demonstrated actions as action-relevant latent variables, and to the others as nuisance variables.

\looseness=-1
Fitting the demonstrations well does not in itself guarantee that the WAM is acting from the action-relevant latent variables. For each step of the demonstration we could write $o_t=f(s_t)$ and $a_t=\pi(s_t,\eta_t)$ where $s_t$ is the latent state, $f$ is some unknown invertible function, $\pi$ is some unknown action mechanism and $\eta_t$ is exogenous noise. Separating action from prediction is then about recovering (up to certain indeterminacies) the action-relevant latent variables and which of those variables cause each action dimension. Without further assumptions, the latent causal variables cannot be identified from their observations \citep{hyvarinen1999nonlinear,locatello2019challenging}. Observing the actions as well doesn't help with this ambiguity, as two models can give rise to the same distribution of $(o_t,a_t)$ whilst their latent variables differ by an arbitrary invertible transformation. This transformation can mix the action-relevant latent variables with the nuisance ones, so a very flexible action head could generate actions from variables which also contain information about the tablecloth (which doesn't in fact cause the motion) and still fit the demonstrations equally well.

To make latent variables identifiable, existing works in nonlinear independent component analysis (ICA) introduce auxiliary variables, such as domain or time indices, or exploit temporal dependence \citep{hyvarinen2016unsupervised,hyvarinen2017nonlinear,hyvarinen2019nonlinear,khemakhem2020variational}. For latent causal processes, identifiability has been established under sufficient temporal dependence \citep{yao2022temporally} and conditions on the latent dynamics \citep{song2023temporally}.  Interventions with known targets have also been introduced to pin down the latents \citep{lippe2022citris}. Many of these assumptions might be violated for demonstrations in complex real-world scenarios such as robotics, where some time steps might be independent, the latent dynamics might not be Markov, and controlled interventions are either prohibitively expensive or simply impossible.

Another line of research constrains the generative process by restricting the function class or enforcing sparsity in the dependency structure \citep{kivva2022identifiability,zheng2022identifiability,lachapelle2022disentanglement}. Most of these studies, similar to those previously discussed, aim to recover all latent variables, requiring their conditions to hold even for nuisance variables that a robot can disregard. In contrast, results on partial identifiability focus on a subset of latent variables, determined by side information accompanying the data, such as content shared across views \citep{vonkugelgen2021self,yao2024multiview}, components invariant across domains \citep{kong2022partial}, or elements relevant to specific tasks \citep{zheng2026generalist}. For robots learning from demonstrations, the relevant subset consists of the latent causes underlying the demonstrated actions. Since no view, domain, or task label directly specifies this subset, the actions themselves must indicate which latent variables are pertinent. This raises the following question:
\begin{center}
\emph{From demonstrations alone, can we separate what causes actions from what prediction needs?}
\end{center}
In this work, we answer this question with a constraint on the action side that leaves prediction free to use every latent variable. Since each action is generated from only part of the latent state, constraining the number of dependencies between latent variables and action dimensions can separate that part from the rest. Specifically, we prove that, without such a constraint and under nondegeneracy conditions, fitting the demonstrations only ensures that each action dimension of the estimated mechanism depends on at least as many latent variables as the true one (Prop. \ref{prop:generalist}). We then incorporate a sparsity regularization on the dependency structure between latent state and action variables during estimation. Under the same conditions, the lower bound becomes an equality and the dependency structure is identified up to a permutation of the latent variables (Thm. \ref{thm:action}).

Moreover, we prove that the estimated action-relevant variables under the proposed constraint are an invertible function of the ground-truth latent variables causing the true action (Thm. \ref{thm:state}). The latent causes of the actions are thus identified as a block and are separated from the nuisance variables. The theoretical guarantee relies purely on observational demonstrations, without requiring any interventions, sufficient temporal dependence, or any assumptions about the latent dynamics. The framework also allows dependence between action-relevant latent variables and the nuisance variables (e.g., a tablecloth could be correlated with the positions of the cubes). According to the established theory, we propose CWAM (Causal WAM), which adds a sparse routing path from the visual and task tokens to the action head in Cosmos~3 and applies a token-level surrogate of the constraint to this path (Sec. \ref{sec:method}). In synthetic scenarios, we validate the identifiability claims shown by our theory. On RoboCasa365, LIBERO, and a real Franka R3 robot, the routing path improves aggregate closed-loop success over Cosmos~3 without it, and in most visualized frames its routing mass falls on the object being moved and its target (Sec. \ref{sec:experiments}).

\section{Problem setup}
\label{sec:setup}

\paragraph{Data.} Demonstration data are sequences $\{o_t,a_t\}_{t=1}^T$ with $o_t\in\R^{d_o}$ and $a_t\in\R^{d_a}$. Relevance is defined by the executed action at each step, and no task, skill, or segment labels are assumed.

\paragraph{Observational world action process.} We define latent states $\{s_t\}_{t=1}^T$, $s_t\in\R^n$, and hidden nonparametric maps
\begin{subequations}\label{eq:dgp}
\begin{align}
  o_t &= f(s_t),\label{eq:dgp-obs}\\
  a_t &= \pi(s_t,\eta_t),\label{eq:dgp-action}
\end{align}
\end{subequations}
where $f:\R^n\to\R^{d_o}$ is a diffeomorphism onto its image ($d_o\gg n$ is allowed), $\pi$ is an unknown action mechanism, and $\eta_t$ is exogenous noise.
The same
mechanism applies to an $H$-step chunk $a_{t:t+H-1}$ when the policy emits a short
horizon at once (Section~\ref{sec:method}); identifiability is stated per action
coordinate.
We do not require temporal dependence: consecutive steps may be disconnected and each $(o_t,a_t)$ may be i.i.d. Within a step, action-relevant latent variables are those on which $\pi$ functionally depends. Since $f$ is invertible, $o_t$ depends on every latent variable, whereas $\pi$ depends only on the action-relevant ones. A fitted action mechanism may \emph{over-cover} the support of $J_\pi$, that is, depend on more latent variables than $\pi$, and we ask when the latent variables used to generate actions are separated from the nuisance ones (notation below).

\paragraph{Mechanism supports and the compact state.} For a differentiable structural map $m$ of the state, let $J_m(s)=\partial m/\partial s$, and let $\nz(J_m)=\{(i,j):\Pr[J_m(s)_{ij}\neq0]>0\}$ be its functional support, with $\|\nz(J_m)\|_0$ its cardinality. For $\pi$, the Jacobian is taken with respect to $s_t$ at fixed $\eta_t$. We write $I_i:=\{j:(i,j)\in\nz(J_\pi)\}$ for the coordinates that drive action dimension~$i$, set $\IA=\bigcup_i I_i$, and write $s_{t,\IA}=(s_{t,j})_{j\in\IA}$ for the action-relevant subvector. Relevance is functional (a perturbation of coordinate $j$ can change $a$) and is not a statement about correlation. A learner returns an encoder $\hat f^{-1}$ with $\hat s_t=\hat f^{-1}(o_t)$, an action mechanism $\hat\pi$, estimated index sets $\hat I_i:=\{j:(i,j)\in\nz(J_{\hat\pi})\}$ and $\hat{I}_{\mathrm{A}}=\bigcup_i\hat I_i$, and the \emph{compact action-relevant state} $\Ct:=\hat s_{t,\hat{I}_{\mathrm{A}}}$. At deployment, $\Ct$ is computed from observations only; training may use $a_t$ to fit $\hat\pi$, but actions are not inputs to $\hat f^{-1}$.

\paragraph{Identification goal.} Write $\mathcal{L}_{f,\pi}$ for the law of $(o,a)$ induced by \eqref{eq:dgp-obs}--\eqref{eq:dgp-action}. Two processes $(f,\pi)$ and $(\hat f,\hat\pi)$ are \emph{observationally equivalent} if
\begin{equation}\label{def:obs-eq}
  \mathcal{L}_{f,\pi}
  \;=\;
  \mathcal{L}_{\hat f,\hat\pi}.
\end{equation}
Under observational equivalence on $(o,a)$, the estimated state $\hat s_t$ determines every latent variable, as prediction requires, so we only ask about the action side. We first ask whether the estimated action mechanism matches the true state-to-action dependency pattern. We then ask whether the compact state $\Ct$ used to generate actions is separated from the nuisance variables, in the sense that it carries the action-relevant block $s_{t,\IA}$ up to a block-wise invertible transformation. Section~\ref{sec:theory} states sufficient conditions for both questions. Section~\ref{sec:exp-synthetic} is the direct empirical check with known $\IA$, and Appendix~\ref{app:protocol} records the synthetic generator, estimators, and metrics in full.

\paragraph{Estimated process.} An estimated process is any pair $(\hat f,\hat\pi)$ that satisfies \eqref{eq:dgp-obs}--\eqref{eq:dgp-action} with $\hat f:\R^n\to\R^{d_o}$ a diffeomorphism onto its image. With $\hat s_t=\hat f^{-1}(o_t)$ and $\phi:=\hat f^{-1}\circ f$, observational equivalence allows $\phi$ to be an arbitrary diffeomorphism on $\R^n$ compatible with \eqref{def:obs-eq}. By itself, observational equivalence does not identify the full latent state $s_t$ or a unique encoder $\hat f^{-1}$. Section~\ref{sec:theory} asks what \emph{is} identified (the compact state $\Ct$ and the functional support of $J_\pi$) under additional inductive bias.

\paragraph{Regularity and coupling.} Throughout, $f$, $\hat f$, $\pi(\cdot,\eta)$, and $\hat\pi(\cdot,\eta)$ are continuously differentiable, $f$ and $\hat f$ have the same image (so $\phi$ is a diffeomorphism of $\R^n$), and $s_t$ has a density that is positive on $\R^n$ (hence so does $\hat s_t=\phi(s_t)$). The results below use a pathwise coupling of the two processes, in which they share the noise and share the latent state up to $\phi$, that is, $o_t=f(s_t)=\hat f(\hat s_t)$ and $a_t=\pi(s_t,\eta_t)=\hat\pi(\hat s_t,\eta_t)$ with $\hat s_t=\phi(s_t)$. This coupling implies \eqref{def:obs-eq}.

\vspace{-0.2em}
\section{Theory}
\label{sec:theory}
\vspace{-0.2em}

We now study, within each time step, when fitting $(o,a)$ separates the latent causes of the action from the nuisance coordinates that only prediction needs. In \eqref{eq:dgp}, only the block $s_{t,\IA}$ need enter $\pi$. Remaining coordinates may still be required for $o_t=f(s_t)$, so a latent state sufficient for observations need not isolate the coordinates that cause the action. All guarantees below concern one application of \eqref{eq:dgp}, or equivalently one pair $(o,a)$ drawn from $\mathcal{L}_{f,\pi}$ in \eqref{def:obs-eq}. We keep the subscript $t$ when referring to demonstration tuples $(o_t,a_t)$, but the argument does not use cross-time structure (Appendix~\ref{app:proof} fixes a step and drops $t$). The within-step conclusion applies at every $t$, and if the steps are identically distributed, the law in \eqref{def:obs-eq} also does not depend on $t$.

\subsection{Identifiability without sparsity}
\label{sec:theory-nosparse}

Without a support-sparsity constraint on $\hat\pi$, observational equivalence on $(o,a)$ may still hold while $\hat\pi$ depends on more latent coordinates than $\pi$. The action side then need not be separated from the nuisance coordinates, and fitting guarantees only the one-sided bound in Proposition~\ref{prop:generalist}.

\begin{proposition}\label{prop:generalist}
Consider two observationally equivalent processes $o_t=f(s_t)=\hat f(\hat s_t)$, $a_t=\pi(s_t,\eta_t)=\hat\pi(\hat s_t,\eta_t)$ of the form \eqref{eq:dgp} that satisfy the regularity conditions of Section~\ref{sec:setup}. Assume that, for each action dimension $i\in[d_a]$, there exist $|I_i|$ states $s^{(1)},\dots,s^{(|I_i|)}$ at which the rows $(J_\pi(s^{(l)}))_{i,\cdot}$ are linearly independent, and for $\phi=\hat f^{-1}\circ f$, $P=J_{\phi^{-1}}$, and a matrix $M$ with $\nz(M)=\nz(P)$, $\nz\big((J_\pi(s^{(l)})M)_{i,\cdot}\big)\subseteq\nz\big((J_{\hat\pi}(\hat s^{(l)}))_{i,\cdot}\big)$ with $\hat s^{(l)}=\phi(s^{(l)})$. Then, for every action dimension $i\in[d_a]$,
\[
\|\nz\big((J_{\hat\pi})_{i,\cdot}\big)\|_0\;\geq\;\|\nz\big((J_\pi)_{i,\cdot}\big)\|_0 .
\]
\end{proposition}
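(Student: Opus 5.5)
We follow the standard sparsity-identifiability argument (as in \citet{zheng2022identifiability} and \citet{zheng2026generalist}): relate the two Jacobians through the chain rule, then use the linear-independence assumption to force a matching between the supports of $J_\pi$ and $J_{\hat\pi}$.

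\textbf{Step 1 (Jacobian identity).} From the coupling $\pi(s,\eta)=\hat\pi(\phi(s),\eta)$ for all $s$ at fixed $\eta$, we have $\pi(\cdot,\eta)=\hat\pi(\cdot,\eta)\circ\phi$. Equivalently, $\hat\pi(\hat s,\eta)=\pi(\phi^{-1}(\hat s),\eta)$. Differentiating in $\hat s$ gives
\[
J_{\hat\pi}(\hat s)=J_\pi(s)\,P(\hat s),\qquad P=J_{\phi^{-1}},\; s=\phi^{-1}(\hat s).
\]
Since $\phi$ is a diffeomorphism, $P(\hat s)$ is invertible everywhere. Because the densities are positive, $\Pr$-nonzero supports can be read off on sets of positive measure.

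\textbf{Step 2 (a matching from invertibility).} Fix $i$ and write $I_i$, $\hat I_i$. The rows $r_l:=(J_\pi(s^{(l)}))_{i,\cdot}$, $l=1,\dots,|I_i|$, are supported in $I_i$ and linearly independent, so they span $\{v:\operatorname{supp}v\subseteq I_i\}$. Hence for each $j\in I_i$ the basis vector $e_j^\top$ is a linear combination $\sum_l\alpha_l r_l$. The matrix $M$ has the pattern of $P$, which is invertible on positive-measure sets. Taking $M$ generic with that pattern, $M$ is invertible, so a standard Hall/determinant-expansion argument gives a permutation $\sigma$ with $M_{j,\sigma(j)}\neq0$ for all $j$. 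Then $e_j^\top M=M_{j,\cdot}$ has a nonzero entry at $\sigma(j)$. Since $M_{j,\cdot}=\sum_l\alpha_l r_lM$, some $l$ has $(r_lM)_{\sigma(j)}\neq0$. By the hypothesis $\nz((J_\pi(s^{(l)})M)_{i,\cdot})\subseteq\nz((J_{\hat\pi}(\hat s^{(l)}))_{i,\cdot})$, this gives $\sigma(j)\in\hat I_i$.

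\textbf{Step 3 (conclusion).} The map $\sigma$ restricted to $I_i$ is injective into $\hat I_i$, so $|\hat I_i|\ge|I_i|$, which is the claim $\|\nz((J_{\hat\pi})_{i,\cdot})\|_0\ge\|\nz((J_\pi)_{i,\cdot})\|_0$ for every $i$.

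\textbf{Main obstacle.} Step 2 is the crux. We must guarantee that one permutation $\sigma$ with $M_{j,\sigma(j)}\neq0$ exists, and that nonzero entries are not cancelled when passing from $e_j^\top M$ back to the individual $r_lM$. The first point we handle by noting that $\nz(M)=\nz(P)$ and $P$ is nonsingular somewhere, so $\det$ has a nonvanishing term in its Leibniz expansion, whose permutation we take as $\sigma$. The second point is handled by the linearity argument above. If the hypothesis only fixes the pattern of $M$ rather than its values, we choose $M$ generic to avoid accidental cancellations.
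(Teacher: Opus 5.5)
Your proposal is correct and follows essentially the same route as the paper. Both use the chain-rule identity, write $e_j^\top$ in the basis formed by the rows $(J_\pi(s^{(l)}))_{i,\cdot}$, obtain from the Leibniz expansion of $\det P$ a permutation $\sigma$ with $P_{j,\sigma(j)}(\hat s^{*})\neq0$, transfer it to $M$ via $\nz(M)=\nz(P)$, and use the inclusion hypothesis to place $\sigma(j)$ in the row support. Your ``some $l$ has $(r_lM)_{\sigma(j)}\neq0$'' is just the contrapositive form of the paper's statement that $M_{j,\cdot}\in\mathrm{span}\{e_r:r\in S_i\}$.

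Two small fixes. Drop the ``choose $M$ generic'' remark: $M$ is fixed by the hypothesis, and the Leibniz argument on $P$ already gives $M_{j,\sigma(j)}\neq0$ without it. Also state explicitly the continuity-plus-positive-density step. It is what turns pointwise nonzero entries (of $P$ at $\hat s^{*}$ and of $J_{\hat\pi}$ at $\hat s^{(l)}$) into membership in the functional supports $\nz(P)$ and $\hat I_i$. It also makes $J_\pi$ vanish everywhere off $I_i$.
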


\paragraph{Proof sketch.} The span condition ensures that reparameterization between $s$ and $\hat s$ cannot eliminate a true functional dependency on $I_i$. The inclusion condition links $J_{\hat\pi}$ to $J_\pi$ through $\phi$ and rules out accidental cancellations, so, up to a permutation $\rho$ of the latent coordinates that is common to all action dimensions, each row of $J_{\hat\pi}$ must activate at least the coordinates required by the corresponding row of $J_\pi$. Counting these coordinates gives the bound. Details are in Appendix~\ref{app:proof}.

\paragraph{Discussion on assumptions.}
The span condition in Proposition~\ref{prop:generalist}
requires the Jacobian row of each action dimension to span
the coordinate subspace indexed by its true parents.
This is a separate assumption on the action mechanism:
smoothness and a positive state density alone do not imply it.
A fuller discussion is in Appendix~\ref{app:nondeg}.

\paragraph{Implication.} Proposition~\ref{prop:generalist} makes over-coverage precise. Without a bias toward a \emph{minimal} action pathway, observational fitting need not shrink $\hat\pi$ to $\pi$. It need only learn a mechanism that is no smaller in support than the true one. The estimated structure from latent coordinates to actions can then contain spurious edges, through which nuisance coordinates enter the action side. Neither matching $\nz(J_\pi)$ nor invertible recovery of $s_{t,\IA}$ in $\Ct$ is guaranteed.

\subsection{Identifiability under support sparsity}
\label{sec:theory-sparse}
The negative result points to the design choice used in our method: constrain \emph{which} latent coordinates $\hat\pi$ may depend on. Support sparsity requires that the estimated mechanism activate no more coordinates, counted over action dimensions, than the true one,
\begin{equation}\label{eq:sparsity}
\|\nz(J_{\hat\pi})\|_0\;\leq\;\|\nz(J_\pi)\|_0 .
\end{equation}
Combined with Proposition~\ref{prop:generalist}, this inequality bounds the size of the estimated support from both sides and identifies the state-to-action dependency pattern up to relabeling. The mechanism-level guarantee is as follows.

\begin{theorem}\label{thm:action}
Consider two observationally equivalent processes $o_t=f(s_t)=\hat f(\hat s_t)$,
$a_t=\pi(s_t,\eta_t)=\hat\pi(\hat s_t,\eta_t)$ of the form \eqref{eq:dgp}, and assume
the regularity conditions of Section~\ref{sec:setup}, the conditions in Proposition~\ref{prop:generalist}, and \eqref{eq:sparsity}.
Then there is a permutation $\rho$ of $[n]$ such that
\[
\nz(J_{\hat\pi})\;=\;\nz(J_\pi)\,P_\rho,
\]
and, for every action dimension $i\in[d_a]$,
$\nz\big((J_{\hat\pi})_{i,\cdot}\big)=\rho(I_i)$.
Here $P_\rho$ is the permutation matrix with $(P_\rho)_{j,k}=\mathbf 1\{\rho(j)=k\}$, and supports are identified with $0/1$ matrices.
\end{theorem}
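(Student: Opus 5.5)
The argument runs the proof of Proposition~\ref{prop:generalist} again, keeping track of the permutation it produces, and then uses \eqref{eq:sparsity} to turn each rowwise inequality into an equality.

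\textbf{Step 1: inclusion.} Differentiate $\pi(s,\eta)=\hat\pi(\phi(s),\eta)$ to get $J_\pi(s)=J_{\hat\pi}(\hat s)J_\phi(s)$, equivalently $J_{\hat\pi}(\hat s)=J_\pi(s)P$ with $P=J_{\phi^{-1}}(\hat s)$. Fix $i$. The $|I_i|$ rows $(J_\pi(s^{(l)}))_{i,\cdot}$ are linearly independent and supported on $I_i$, so they span $\R^{I_i}$. By the inclusion condition with $\nz(M)=\nz(P)$, every $j\in I_i$ and $k$ with $(j,k)\in\nz(P)$ satisfy $k\in\nz((J_{\hat\pi})_{i,\cdot})$. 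Cancellation is excluded: take the combination of the spanning rows equal to $e_j$; after multiplication by $M$ it yields $M_{j,\cdot}$, whose support must lie in the union of the supports of $(J_{\hat\pi}(\hat s^{(l)}))_{i,\cdot}$. This gives
\[
\textstyle\bigcup_{j\in I_i}\nz(P_{j,\cdot})\subseteq\hat I_i .
\]

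\textbf{Step 2: a common permutation.} $P(s)$ is invertible wherever the density is positive, which is everywhere. So at some point its determinant expansion has a nonzero term. This yields a permutation $\rho$ with $(j,\rho(j))\in\nz(P)$ for all $j$, and by Step~1 $\rho(I_i)\subseteq\hat I_i$ for every $i$. The same $\rho$ serves all action dimensions because it depends only on $P$. This recovers Proposition~\ref{prop:generalist}: $|\hat I_i|\ge|\rho(I_i)|=|I_i|$.

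\textbf{Step 3: equality.} Summing over $i$ and applying \eqref{eq:sparsity} gives
\[
\textstyle\sum_i|I_i|\le\sum_i|\hat I_i|=\|\nz(J_{\hat\pi})\|_0\le\|\nz(J_\pi)\|_0=\sum_i|I_i| .
\]
Every inequality is therefore tight, so $|\hat I_i|=|\rho(I_i)|$ for each $i$. Combined with the inclusion $\rho(I_i)\subseteq\hat I_i$, this forces $\hat I_i=\rho(I_i)$. As $0/1$ matrices, the $i$th row of $\nz(J_\pi)P_\rho$ has ones exactly at $\{k:\rho(j)=k,\ j\in I_i\}=\rho(I_i)$. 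Hence $\nz(J_{\hat\pi})=\nz(J_\pi)P_\rho$.

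\textbf{Main obstacle.} The hardest part is Step~1 combined with Step~2. First, the inclusion must be stated for the functional support, which is the union over points. Second, the entries of $P$ at the points $\hat s^{(l)}$ must be tied to a single permutation. I would first try to choose $\rho$ from $\nz(P)$, understood as the union of supports over all $s$. The issue is that the determinant argument only gives $\rho$ inside the support at one point, but that is contained in $\nz(P)$, and Step~1 is phrased in terms of $\nz(P)$ through $M$, so this is consistent. I would also check that $\nz(M)=\nz(P)$ is interpreted globally, so that the inclusion from Step~1 is available for every edge $(j,\rho(j))$.
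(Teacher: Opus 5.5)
Your proposal is correct and follows the paper's proof essentially step for step: the same rowwise inclusion (your Step~1 is the paper's \eqref{eq:M-rows} combined with $\nz(M)=\nz(P)$), a single permutation $\rho$ taken from a nonzero Leibniz term of $\det P(\hat s^{*})$, summation against \eqref{eq:sparsity} to force equality in every row, and then reading off $\nz(J_{\hat\pi})=\nz(J_\pi)P_\rho$. One small slip is that $P$ is invertible because $\phi$ is a diffeomorphism, not because of the density; the positive density, together with continuity of $P$, is what lifts $P_{j,\rho(j)}(\hat s^{*})\neq0$ to $(j,\rho(j))\in\nz(P)$.
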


\paragraph{Proof sketch.} By the proof of Proposition~\ref{prop:generalist}, reparameterization through $\phi=\hat f^{-1}\circ f$ gives a common permutation $\rho$ with $\rho(I_i)\subseteq\nz((J_{\hat\pi})_{i,\cdot})$ for every $i$. Sparsity \eqref{eq:sparsity} forbids extra activations, so the inclusions become equalities and assemble into $\nz(J_{\hat\pi})=\nz(J_\pi)P_\rho$ (Appendix~\ref{app:proof}).

\paragraph{Implication.} Each action dimension $i$ of the fitted mechanism depends on exactly the estimated coordinates $\rho(I_i)$, which correspond to its true parents $I_i$, and the bipartite pattern from latent coordinates to actions is matched, whereas without sparsity it can be enlarged. Sparsity turns the lower bound in Proposition~\ref{prop:generalist} into an exact answer to \emph{which} coordinates each action dimension functionally depends on. Theorem~\ref{thm:action} concerns the Jacobian of $\hat\pi$, that is, the bipartite pattern from latent coordinates to executed actions. However, the robot deploys the compact state $\Ct=\hat s_{t,\hat{I}_{\mathrm{A}}}$, which is computed from observations without access to $a_t$. Even with matched supports, one could imagine a lossy projection that drops information within the action-relevant block or leaks nuisance coordinates into $\Ct$. The next result shows that this does not happen under the coupling of Section~\ref{sec:setup} once supports align. Specifically, the reparameterization $\phi=\hat f^{-1}\circ f$ cannot mix the nuisance coordinates into the estimated coordinates $\rho(\IA)$, so $\Ct$ is an invertible function of $s_{t,\IA}$.

\begin{theorem}\label{thm:state}
Under the hypotheses of Theorem~\ref{thm:action}, including the regularity conditions of Section~\ref{sec:setup}, with the same permutation $\rho$,
there exists an invertible map $h$ such that
\[
\Ct\;=\;\hat s_{t,\rho(\IA)}\;=\;h\big(s_{t,\IA}\big).
\]
\end{theorem}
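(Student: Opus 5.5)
The plan is to differentiate the pathwise coupling $\pi(s,\eta)=\hat\pi(\phi(s),\eta)$ and read off the support of $J_\phi$ from the matched supports of Theorem~\ref{thm:action}. By the chain rule,
\[
J_\pi(s)=J_{\hat\pi}(\phi(s))\,J_\phi(s),
\qquad\text{equivalently}\qquad
J_{\hat\pi}(\hat s)=J_\pi(s)\,P(\hat s),\quad P=J_{\phi^{-1}} .
\]
Theorem~\ref{thm:action} gives, for each $i$, $\nz((J_{\hat\pi})_{i,\cdot})=\rho(I_i)$. Hence $\hat{I}_{\mathrm{A}}=\rho(\IA)$, which already proves the first equality $\Ct=\hat s_{t,\rho(\IA)}$. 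It remains to show that the coordinates $\hat s_{\rho(\IA)}=\phi_{\rho(\IA)}(s)$ depend on $s$ only through $s_{\IA}$, and that the resulting map is invertible.

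\textbf{Step 1 (block structure of $P$).} Fix $i$. For every $k\notin\rho(I_i)$ we have $(J_{\hat\pi})_{i,k}=0$ almost surely, so
\[
\sum_{j\in I_i}(J_\pi(s))_{i,j}\,P_{j,k}=0 .
\]
Evaluate this at the $|I_i|$ states $s^{(l)}$ of Proposition~\ref{prop:generalist}, where the rows restricted to $I_i$ are linearly independent. This forces $P_{j,k}=0$ for $j\in I_i$ and $k\notin\rho(I_i)$. The span condition holds at finitely many points, whereas $P$ varies with $s$, so the vanishing has to be transferred. I would use the inclusion condition with $\nz(M)=\nz(P)$, exactly as in the appendix proof of Proposition~\ref{prop:generalist}: it gives
\[
\nz(P)_{j,k}\neq 0,\ j\in I_i \;\Longrightarrow\; k\in\nz((J_{\hat\pi})_{i,\cdot})=\rho(I_i).
\]
This is a statement about supports over the positive-density set, so it holds almost everywhere and, by continuity, everywhere. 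Taking the union over $i$ yields
\[
P_{j,k}=0\quad\text{whenever } j\in\IA,\ k\notin\rho(\IA).
\]
Thus $J_{\phi^{-1}}$ maps no nuisance estimated direction into true action-relevant rows. In other words, $s_{\IA}=(\phi^{-1})_{\IA}(\hat s)$ depends only on $\hat s_{\rho(\IA)}$.

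\textbf{Step 2 (transfer to $\phi$ and invertibility).} Write $g:=(\phi^{-1})_{\IA}$, which is a function of $\hat s_{\rho(\IA)}$ alone. To get the statement in the direction $\hat s_{\rho(\IA)}=h(s_{\IA})$, I would use the block-triangular form. Order the coordinates with the action blocks first. Then $P$ has zero upper-right block, and its upper-left block $P_{\IA,\rho(\IA)}$ is square because $|\rho(\IA)|=|\IA|$. Since $P$ is invertible everywhere, $\det P=\det P_{\IA,\rho(\IA)}\cdot\det P_{\IA^c,\rho(\IA)^c}$, so $P_{\IA,\rho(\IA)}$ is invertible everywhere. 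The inverse of a block-triangular matrix keeps the same zero block, so
\[
(J_\phi)_{\rho(\IA),\IA^c}=0 ,
\]
meaning $\hat s_{\rho(\IA)}$ depends only on $s_{\IA}$. Because $\R^n$ is connected and the density is positive, this gives a well-defined $C^1$ map $h$ with $\hat s_{\rho(\IA)}=h(s_{\IA})$. Composing, $g\circ h=\mathrm{id}$ on the relevant block, since $\phi^{-1}\circ\phi=\mathrm{id}$ restricted to the $\IA$ coordinates. Likewise $h\circ g=\mathrm{id}$, so $h$ is invertible.

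The main obstacle is Step 1, namely upgrading the pointwise span condition into an almost-everywhere vanishing of the off-block entries of $P$. This must be done without assuming the zero pattern of $P$ is constant, and that is precisely where the inclusion condition with $M$ is needed. Controlling the sign of the off-diagonal block, i.e. whether the zeros sit in $P$ or in $J_\phi$, is handled by the triangular-inverse argument of Step 2.
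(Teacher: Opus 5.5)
Your proposal is correct and follows the paper's proof essentially step for step. First, the zero block $P_{j,k}=0$ for $j\in\IA$, $k\notin\rho(\IA)$ comes from the $M$-row span argument of Proposition~\ref{prop:generalist}, together with $\nz(M)=\nz(P)$ and continuity; as you note, direct evaluation at the points $s^{(l)}$ would not suffice on its own because $P$ is evaluated at different $\hat s^{(l)}$. Then, exactly as in the paper, block-triangular inversion moves this zero block to $J_\phi$, the two functional-dependence statements give $h$ and its inverse, and $\hat I_{\mathrm{A}}=\rho(\IA)$ from Theorem~\ref{thm:action} finishes the proof.
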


\paragraph{Proof sketch.} For $\phi=\hat f^{-1}\circ f$ and $P=J_{\phi^{-1}}$, the aligned row supports $\nz((J_{\hat\pi})_{i,\cdot})=\rho(I_i)$ from Theorem~\ref{thm:action} imply that $P$ is block lower-triangular, with a zero block in rows $\IA$ and columns $\rho([n]\setminus\IA)$. The invertibility of $P$ then yields $\hat s_{t,\rho(\IA)}=h(s_{t,\IA})$ with invertible $h$. By Theorem~\ref{thm:action}, $\hat I_i=\rho(I_i)$ for every $i$, so $\hat{I}_{\mathrm{A}}=\rho(\IA)$ and $\Ct=\hat s_{t,\rho(\IA)}$. Details are in Appendix~\ref{app:proof}.

\paragraph{Implication.} The compact state $\Ct$ is an invertible re-encoding of the true action-relevant block $s_{t,\IA}$. This is stronger than being correlated with pose or gripper-like factors. The conclusion is functional ($\Ct$ depends only on $s_{t,\IA}$ and determines it), so it holds whatever the statistical dependence between $s_{t,\IA}$ and the nuisance coordinates.

\vspace{-0.5em}
\section{Estimation method}
\label{sec:method}
\vspace{-0.5em}

\paragraph{Estimation principle.} Under the pathwise coupling and regularity conditions of Section~\ref{sec:setup}, the nondegeneracy conditions of Proposition~\ref{prop:generalist}, and support sparsity \eqref{eq:sparsity}, Section~\ref{sec:theory} identifies the support of $J_\pi$ up to a permutation and the action-relevant latent variables in $\Ct$ up to a block-wise invertible transformation. We refer to a model fitted to demonstrations with a sparsity penalty on its action mechanism as CWAM (Causal WAM). Synthetic experiments impose coordinate sparsity on a linear head directly (Section~\ref{sec:exp-synthetic}). The deployed action interface extends \eqref{eq:dgp-action} to an $H$-step chunk:
\begin{equation}\label{eq:method-goal}
  \hat a_{t:t+H-1}
  \;=\;
  \hat\pi\bigl(o_t,\,\epsilon_t;\,\Ct\bigr),
  \qquad
  \Ct \;=\; \mathcal{S}\bigl(\hat f^{-1}(o_t)\bigr),
\end{equation}
where $\epsilon_t$ is exogenous noise for generation (an initial draw for rectified flow, in parallel with $\eta_t$ in \eqref{eq:dgp-action}), $\mathcal{S}$ selects a support-sparse subset of the estimated state, and $H$ is the chunk horizon. Figure~\ref{fig:overview} shows the split between world modeling and sparse routing to the action head.

\begin{figure}[t]
  \centering
  \includegraphics[width=\linewidth]{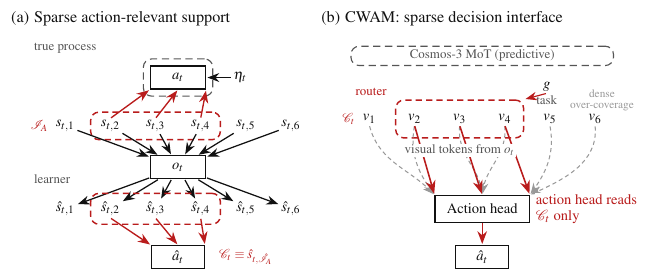}
  \vspace{-2em}
  \caption{The action depends only on a block of the latent state, and CWAM routes an estimate of this block to the action head through a sparse path. (a) In the world action process \eqref{eq:dgp} this block is $s_{t,\IA}$, and the learner $(\hat f,\hat\pi)$ estimates it from observations as the compact state $\Ct$. (b) CWAM keeps the joint rectified-flow training of Cosmos~3 and adds a sparse router over clean context tokens, whose top-$k$ routed tokens form $\Ct$. Gray dashed arrows mark the dense path to the action head, which the default policy keeps.}
  \label{fig:overview}
\end{figure}

\paragraph{Backbone.} Cosmos~3~Nano \citep{nvidia2026cosmos3} co-trains video prediction and action generation from shared context. CWAM keeps that backbone and post-trains it on demonstrations $(o,a)$ with the standard joint rectified-flow recipe \citep{lipman2023flow,liu2023rectified}. A mixture-of-transformers (MoT) generator~\citep{liang2025mot} denoises future video latents and the executed action chunk, conditioned on clean history and language. The leading frame in each window is observed, and later video tokens are noised during training. CWAM instantiates the estimation principle above as
\begin{equation}\label{eq:estimation-objective}
  \min_{\hat f,\hat\pi}\;
  \mathcal{D}\bigl(\mathcal{L}_{f,\pi},\,\mathcal{L}_{\hat f,\hat\pi}\bigr)
  \;+\;
  \mathcal{R}_{\mathrm{sparse}}(\hat\pi),
\end{equation}
where $\mathcal{D}$ penalizes mismatch between the true and estimated laws of $(o,a)$ from \eqref{eq:dgp-obs}--\eqref{eq:dgp-action}, and $\mathcal{R}_{\mathrm{sparse}}$ biases $\hat\pi$ toward a minimal action pathway as in \eqref{eq:sparsity}. In Cosmos~3, $\mathcal{D}$ is the joint post-training loss $\mathcal{L}_{\mathrm{WAM}}$ (the rectified-flow video and action co-training described above), and $\mathcal{R}_{\mathrm{sparse}}=\mathcal{L}_{\mathrm{sparse}}$ in \eqref{eq:router-sparse}. CWAM leaves $\mathcal{L}_{\mathrm{WAM}}$ unchanged, adds the sparse router, and applies $\mathcal{L}_{\mathrm{sparse}}$ to the router readout only.

\paragraph{Action-relevant router.} After the generator tower, each action hidden state $h_i$ routes over clean context indices $j\in\mathcal{J}_t$ (task tokens and \emph{clean} vision, excluding future noised tokens):
\begin{equation}\label{eq:router-attn}
  w_{ij}
  \;=\;
  \mathrm{softmax}_{j\in\mathcal{J}_t}\Bigl(
  \frac{q_i^\top k_j}{\sqrt{d_r}} + b_{ij}(\text{relations})
  \Bigr),
  \qquad
  \tilde h_i \;=\; \sum_{j\in\mathcal{J}_t} w_{ij}\, v_j,
\end{equation}
with relation-aware biases $b_{ij}$ on position, modality, and chunk progress ($d_r=512$, 8 heads). The routed signal enters through a learned gate $g\in(0,1)$, initialized small, as $h_i\leftarrow h_i + g\,\tilde h_i$. Training adds a token-level surrogate for \eqref{eq:sparsity} on this pathway,
\begin{equation}\label{eq:router-sparse}
  \mathcal{L}_{\mathrm{sparse}}
  \;=\;
  \lambda\,\mathbb{E}_i\Bigl[1-\sum_{j\in T_k(i)}w_{ij}\Bigr],
\end{equation}
where $T_k(i)\subseteq\mathcal{J}_t$ is the index set of the $k$ largest weights $w_{ij}$, so the sum is the attention mass on the top-$k$ context tokens ($\lambda=0.01$, ramped in after $10^3$ steps). Post-training minimizes
\begin{equation}\label{eq:cwam-train}
  \mathcal{L}_{\mathrm{total}}
  \;=\;
  \mathcal{L}_{\mathrm{WAM}}
  \;+\;
  \mathcal{L}_{\mathrm{sparse}},
\end{equation}
with $\mathcal{L}_{\mathrm{WAM}}$ identical to that of the matched Cosmos~3 baseline on the same demonstration windows (rectified-flow targets for noised future video tokens and the action chunk).

\paragraph{Compact state.} The interface in \eqref{eq:method-goal} idealizes $\Ct=\mathcal{S}(\hat f^{-1}(o_t))$ on encoder coordinates; on Cosmos~3 the same role is played by clean context tokens:
\begin{equation}\label{eq:ct-tokens}
  \Ct
  \;=\;
  \Bigl\{
  j\in\mathcal{J}_t \;\Big|\;
  \exists\,i:\; j\in T_k(i)
  \Bigr\},
\end{equation}
i.e.\ clean tokens in the top-$k$ routed mass of some action query. The default policy retains the dense MoT visual to  action path, and \eqref{eq:router-sparse} regularizes the router branch of $\hat\pi$.

\vspace{-0.2em}
\section{Experiments}
\label{sec:experiments}
\vspace{-0.2em}

We evaluate CWAM in two settings. On synthetic world action processes with known latent variables, the compact state $\Ct$ used by the sparse action head recovers the action-relevant block, which is consistent with Theorem~\ref{thm:state} (Section~\ref{sec:exp-synthetic}). On RoboCasa365, LIBERO, and a real Franka robot, the sparse routing path improves aggregate closed-loop success over the matched Cosmos~3 baseline (Section~\ref{sec:exp-main-learned}), and on the four RoboCasa365 tasks where CWAM is strongest, the routed tokens alone keep most of that success (Section~\ref{sec:exp-ablations}). Appendix~\ref{app:protocol} gives the synthetic protocol in full.

\subsection{The compact state recovers the action-relevant block}
\label{sec:exp-synthetic}

Theorem~\ref{thm:state} predicts that $\Ct$ is an invertible function of the action-relevant block $s_{t,\IA}$ alone. Comparing $\Ct$ with this prediction needs the true latent variables and $\IA$, which a simulator provides.

\paragraph{Setup.} The process follows \eqref{eq:dgp} with $n=20$ latent variables, $|\IA|=4$ of which drive the action, and observes each latent variable through its own noisy coordinates. A hidden regime, never given to the estimator, scales the action mechanism and a transition that couples the nuisance variables to the action-relevant ones. Each estimator pairs an observation-only MLP encoder with a linear action head, whose weight matrix is its Jacobian, so CWAM with sparsity uses a group penalty on the head columns, a column-level surrogate for \eqref{eq:sparsity}. The compact state $\Ct$ keeps the $|\IA|$ coordinates with the largest mean absolute head weight. The baselines are a standard MLP VAE \citep{zheng2026generalist} fitted without head sparsity and three readouts of one fit without the penalty, namely the full latent state (dense), $|\IA|$ random coordinates (random matched), and the selected coordinates (CWAM without sparsity). Held-out linear probes give the $R^2$ from $\Ct$ to the action-relevant latent variables. For the nuisance variables we report $\Delta_{\mathrm{nuis}}=R^2(\Ct\!\to\!\mathrm{nuis})-R^2(s_{t,\IA}\!\to\!\mathrm{nuis})$, the excess of $\Ct$ over the oracle reference, i.e., over how well the true block predicts them.

\begin{table}[t]
  \caption{The coordinates selected by CWAM with sparsity decode the action-relevant latent variables as well as the dense readout, and the nuisance variables only about as well as the true block. Zeroing them breaks the fitted head. Held-out linear probes give the relevant $R^2$ and $\Delta_{\mathrm{nuis}}$, the nuisance $R^2$ of $\Ct$ minus that of $s_{t,\IA}$. Selected/full (Complement/full) is the held-out action MSE of the fitted head, not refitted, after zeroing the coordinates outside (inside) $\Ct$, relative to the full $\hat s_t$. Mean $\pm$ population std over five seeds, with ratios computed per seed.}
  \label{tab:synthetic}
  \centering
  {\footnotesize\setlength{\tabcolsep}{2.5pt}%
  \resizebox{0.92\linewidth}{!}{%
  
\begin{tabular}{@{}p{0.26\linewidth}ccccc@{}}
\toprule
Method & $|\Ct|$ & Relevant $R^2\uparrow$ & $\Delta_{\mathrm{nuis}}\downarrow$ & Selected/full & Complement/full \\
\midrule
Dense (20D readout) & 20 & 0.9961 $\pm$ 0.0015 & 0.646 $\pm$ 0.115 & n/a & n/a \\
Random matched (4D) & 4 & 0.5493 $\pm$ 0.2964 & 0.392 $\pm$ 0.144 & 48.067 $\pm$ 31.855 & 6.36 $\pm$ 6.42 \\
CWAM without sparsity & 4 & 0.9876 $\pm$ 0.0164 & -0.008 $\pm$ 0.006 & 3.456 $\pm$ 3.180 & 50.58 $\pm$ 28.81 \\
\midrule
Standard MLP VAE & 4 & 0.9440 $\pm$ 0.0448 & 0.316 $\pm$ 0.186 & 3.259 $\pm$ 2.211 & 51.13 $\pm$ 28.54 \\
CWAM with sparsity & 4 & 0.9957 $\pm$ 0.0014 & 0.039 $\pm$ 0.093 & 1.704 $\pm$ 1.398 & 54.65 $\pm$ 25.78 \\
\bottomrule
\end{tabular}
  }}
\end{table}

\paragraph{Block recovery.} A block that depends on $s_{t,\IA}$ alone can predict the nuisance variables only through their dependence on $s_{t,\IA}$, so under Theorem~\ref{thm:state} the nuisance excess of $\Ct$ should be near zero. For the coordinates selected by CWAM with sparsity it is indeed close to zero, and they decode the action-relevant latent variables about as well as the dense readout (Table~\ref{tab:synthetic}). In contrast, the dense readout and the coordinates selected from the VAE decode the nuisance variables well beyond the true block, and four random coordinates recover the action-relevant ones poorly.

\paragraph{Head ablation.} A probe measures what $\Ct$ contains, while the fitted head may still draw on other coordinates. Setting $\Ct$ to zero raises the action error by a similarly large factor for every head that selected a block, including the VAE, while setting the complement to zero costs much less (Table~\ref{tab:synthetic}). Random matched shows the opposite pattern, and keeping only its coordinates breaks the head, so the head depends on which coordinates are kept more than on how many.

\paragraph{Hidden environments.} The theorems assume nothing about how unobserved regimes evolve, so hidden variation in the nuisance dynamics should leave the conclusion of Theorem~\ref{thm:state} unchanged. A second hidden index $e_t$, resampled at every step and never given to the encoder, rescales only the nuisance variables in the transition, and we increase its number of values $E$ with the action mechanism fixed. As $E$ grows, neither readout loses accuracy on the action-relevant latent variables, and only the dense readout keeps a large nuisance excess (Appendix~\ref{app:synthetic-env}, Table~\ref{tab:synthetic-env}).

\subsection{The sparse routing path improves closed-loop success}
\label{sec:exp-main-learned}

For a pretrained WAM, the true latent variables are unknown, so on Cosmos~3 we ask whether the sparse routing path improves closed-loop success. We compare with Cosmos~3 which is post-trained identically with matched setups but without the router and $\mathcal{L}_{\mathrm{sparse}}$ (Section~\ref{sec:method}). CWAM succeeds more often on RoboCasa365 \citep{nasiriany2026robocasa365}, LIBERO, and a real Franka robot that stacks cubes (Figure~\ref{fig:robocasa-libero-radar}).

\begin{figure}[t]
  \centering
  \includegraphics[width=\linewidth]{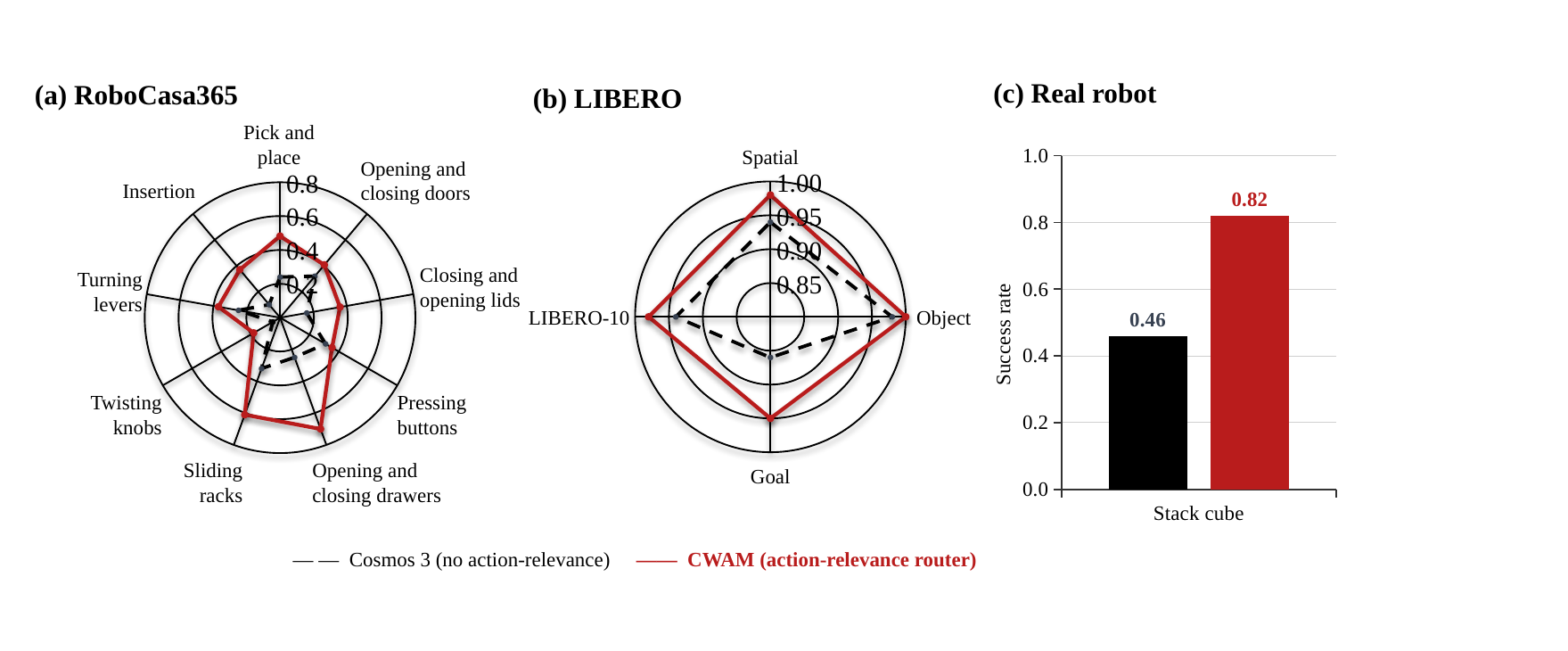}
  \vspace{-3em}
  \caption{CWAM improves closed-loop success over the matched Cosmos~3 baseline in every RoboCasa365 skill family, every LIBERO suite, and on the real robot. (a)~RoboCasa365, mean success per skill family over $17$ tasks (tasks in Table~\ref{tab:robocasa-per-task}). (b)~LIBERO, success per suite over $40$ tasks. (c)~Franka stack cube, $50$ trials per policy. Panels (a) and (b) use $100$ episodes per task. The baseline (Cosmos~3 in the legend) is post-trained on the same data and recipe, without the router and $\mathcal{L}_{\mathrm{sparse}}$.}
  \label{fig:robocasa-libero-radar}
\end{figure}

On RoboCasa365, CWAM achieves a higher mean success rate than the baseline in every foundational skill family (Figure~\ref{fig:robocasa-libero-radar}(a)). The gain is small for button pressing and largest for drawer opening and closing, rack sliding, insertion, and pick-and-place tasks. Averaged across tasks, CWAM's success rate is nearly twice that of the baseline (Table~\ref{tab:robocasa-per-task}). On LIBERO \citep{liu2023libero}, both policies achieve high success rates, with CWAM outperforming the baseline across all suites and showing the largest gain on LIBERO-Goal (Figure~\ref{fig:robocasa-libero-radar}(b)). In real-robot experiments, CWAM succeeds in a majority of trials, whereas the baseline succeeds in fewer than half (Figure~\ref{fig:robocasa-libero-radar}(c)).

\paragraph{Where does the router look?}
In sampled frames of the real-robot sequence in Figure~\ref{fig:router-mass-stackcube} (bottom), routing weights are concentrated on the two cubes involved in the stacking task described in the introduction. The RoboCasa365 rollout (top) shows a similar pattern, with most of the weight assigned to the manipulated object and its target. In some frames, however, the weights are more dispersed, extending to the robot arm and the wall, as in the fourth frame of the real-robot sequence.

\begin{figure}[t]
  \centering
  \includegraphics[width=\linewidth]{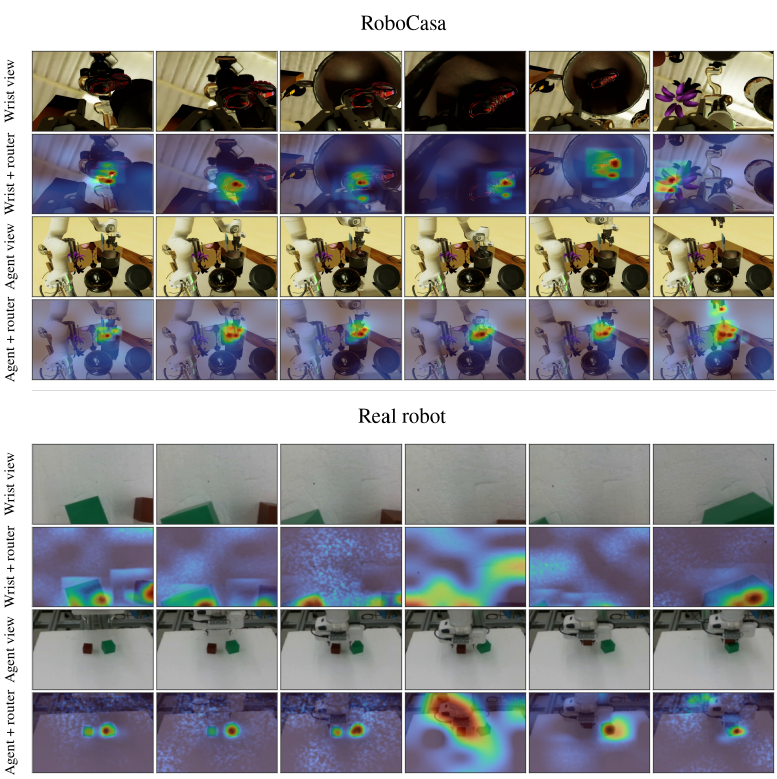}
  \caption{In most frames, the routing mass falls on the object being moved and its target. (Top) A RoboCasa365 rollout. (Bottom) The real robot stacking the red cube on the green cube. Rows show the wrist and agent (third-person) views, each followed by the same view with the routing mass overlaid. Warmer colors mark more of the top-$k$ routing mass over clean visual tokens.}
  \label{fig:router-mass-stackcube}
\end{figure}

\subsection{Routed tokens alone keep most of CWAM's success on its best tasks}
\label{sec:exp-ablations}

\looseness=-1
Routing visualizations do not establish whether the routed tokens alone are sufficient for action prediction, since the default policy retains a dense visual-to-action pathway (Section~\ref{sec:method}). They also do not show whether a different set of tokens of the same size would support comparable performance. To test these questions, we evaluate a fixed CWAM checkpoint under two visual-context interventions on the four RoboCasa365 tasks with the highest success rates. \emph{C-only} restricts the visual context available to the action queries to the routed tokens defining $\Ct$ in \eqref{eq:ct-tokens}. \emph{Random matched} instead provides the action queries with an equal number of randomly selected visual tokens.

\begin{table}[t]
  \caption{On the four RoboCasa365 tasks where CWAM succeeds most, keeping only the routed tokens costs far less success than keeping as many random tokens. The CWAM checkpoint is fixed at $50\mathrm{k}$ steps and run on the episodes of the main evaluation. The action queries see the full visual context (CWAM), only the routed tokens that form $\Ct$ (C-only), or as many visual tokens drawn uniformly at random from the clean context (random matched). Neither intervention refits the policy.}
  \label{tab:robocasa-context-ablation}
  \centering\small
  {\setlength{\tabcolsep}{5pt}%
  \resizebox{0.72\linewidth}{!}{%
  \begin{tabular}{@{}lccc@{}}
\toprule
Task & CWAM  & C-only & Random matched \\
\midrule
Pick place sink to counter & 74\% & 63\% & 28\% \\
Pick place toaster to counter & 72\% & 61\% & 21\% \\
Open drawer & 70\% & 59\% & 33\% \\
Open stand mixer head & 69\% & 58\% & 16\% \\
\midrule
Average (4 tasks) & 71.3\% & 60.3\% & 24.5\% \\
\bottomrule
\end{tabular}
  }}
\end{table}

As shown in Table~\ref{tab:robocasa-context-ablation}, \emph{C-only} retains most of the default policy's performance, whereas \emph{Random matched} results in a substantially larger drop in success rate, with the same ordering across all four tasks. These results suggest that the routed tokens retain much of the visual information needed to perform these tasks. Consistent with the synthetic-head results, the gap between the two conditions shows that token selection affects performance even when the number of retained tokens is fixed.

\vspace{-0.2em}
\section{Conclusion}
\vspace{-0.2em}

We show how to learn causal actions in WAM from observational decomposition alone. Specifically, we prove that under non-degeneracy conditions and without a sparsity constraint, each action dimension of any mechanism estimated to fit the demonstrations can contain action-irrelevant latent variables. Sparsity regularisation enables identification of the true causes of the actions and the causal structure from latent variables to actions (up to a permutation). The latent variables used to generate the actions are then an invertible function of the latent causes of the actions alone, while the full estimated state continues to describe the entire scene for prediction. Neither of the identification results rely on interventions, temporal structure or supervision. Synthetic experiments are in agreement with the theory and CWAM does in fact assist with task completion in simulation as well as on a real robot. The main limitation is the lack of component-wise identifiability that would pin down the individual latents (which may not be required for separating actions but would be useful for precise interventions). There are many more open problems in WAMs that causality can help to address and this paper represents one of the first steps towards a causal understanding of actions.

\section*{AI Statement}

In this paper, we used generative AI tools to assist coding, language editing, and figure plotting. We have reviewed all AI-assisted work. We take responsibility for the final content of this work, including text, claims or artifacts produced with the aid of generative AI.

\bibliography{references}
\bibliographystyle{iclr2027_conference}

\clearpage
\appendix

\section{Proofs}
\label{app:proof}

\paragraph{Notation.}
Table~\ref{tab:proof-notation} summarizes the symbols used in the proofs below.
The main text (Sections~\ref{sec:setup} and~\ref{sec:theory}) indexes samples by time
$t$ as in \eqref{eq:dgp-obs}--\eqref{eq:dgp-action}; the proofs fix one step, drop the
index, and write $o=f(s)$, $a=\pi(s,\eta)$. Identifiability does not use cross-time
structure, so this is equivalent to applying the argument at each $t$ separately.

\begin{table}[t]
  \centering
  \small
  \begin{tabular}{@{}p{0.26\linewidth}p{0.68\linewidth}@{}}
    \toprule
    Symbol & Meaning \\
    \midrule
    $o\in\R^{d_o}$ & Observation \\
    $s\in\R^n$ & Latent state \\
    $a\in\R^{d_a}$ & Action \\
    $f$ & Observation map $s\mapsto o$, diffeomorphism onto its image \\
    $\hat f$ & Learner observation map $\hat s\mapsto o$, diffeomorphism onto its image; its inverse $\hat f^{-1}$ is the encoder \\
    $\pi$ & Action mechanism $(s,\eta)\mapsto a$ in \eqref{eq:dgp-action} \\
    $\hat\pi$ & Estimated action mechanism \\
    $\eta$ & Exogenous action noise (held fixed in $J_\pi$, $J_{\hat\pi}$) \\
    $J_m(s)$ & Jacobian of a vector-valued map $m$ with respect to $s$ \\
    $\nz(A)$ & Index set of functionally nonzero entries of matrix or vector $A$ \\
    $\|\nz(A)\|_0$ & Cardinality of $\nz(A)$ (support size, $\ell_0$ norm) \\
    $I_i\subseteq[n]$ & Latent indices that drive action dimension $i\in[d_a]$ \\
    $\IA$ & $\bigcup_{i=1}^{d_a} I_i$; action-relevant coordinate set \\
    $s_{\IA}$ & Subvector of $s$ on coordinates $\IA$ \\
    $\hat I_i$, $\hat{I}_{\mathrm{A}}$, $C$ & Learner's index sets $\hat I_i=\{j:(i,j)\in\nz(J_{\hat\pi})\}$ and $\hat{I}_{\mathrm{A}}=\bigcup_i\hat I_i$; compact state
    $C=\hat s_{\hat{I}_{\mathrm{A}}}$ (denoted $\Ct$ in Section~\ref{sec:setup}) \\
    Observational equivalence & Same law of $(o,a)$; the results use the pathwise coupling $\hat s=\phi(s)$ with shared $\eta$, which implies it (Section~\ref{sec:setup}) \\
    $\phi$ & Reparameterization $\phi:=\hat f^{-1}\circ f$, $\hat s=\phi(s)$ \\
    $P(\hat s)$ & $J_{\phi^{-1}}(\hat s)$; chain rule links $J_{\hat\pi}$ and $J_\pi$ \\
    $\rho$ & Permutation of $[n]$ aligning true and estimated coordinates \\
    $P_\rho$ & Permutation matrix, $(P_\rho)_{j,k}=\mathbf 1\{\rho(j)=k\}$ \\
    \bottomrule
  \end{tabular}
  \caption{Summary of notation for the identifiability proofs.}
  \label{tab:proof-notation}
\end{table}

\subsection{Proof of Proposition~\ref{prop:generalist}}
\noindent\textbf{Proposition~\ref{prop:generalist}.}
Consider two observationally equivalent models $o=f(s)=\hat f(\hat s)$,
$a=\pi(s,\eta)=\hat\pi(\hat s,\eta)$ as in Table~\ref{tab:proof-notation}. Assume
that, for each action dimension $i\in[d_a]$, there exist $|I_i|$ states
$s^{(1)},\dots,s^{(|I_i|)}$ at which the rows $(J_\pi(s^{(l)}))_{i,\cdot}$ are
linearly independent, and for $\phi=\hat f^{-1}\circ f$, $P=J_{\phi^{-1}}$, and a
matrix $M$ with $\nz(M)=\nz(P)$,
$\nz\big((J_\pi(s^{(l)})M)_{i,\cdot}\big)\subseteq
\nz\big((J_{\hat\pi}(\hat s^{(l)}))_{i,\cdot}\big)$ with
$\hat s^{(l)}=\phi(s^{(l)})$, and that the regularity conditions of
Section~\ref{sec:setup} hold. Then, for every action dimension $i\in[d_a]$,
\begin{equation}
\|\nz\big((J_{\hat\pi})_{i,\cdot}\big)\|_0\;\geq\;
\|\nz\big((J_\pi)_{i,\cdot}\big)\|_0 .
\end{equation}
\par\noindent\textit{Proof.}
\medskip
\noindent\emph{Step 1 (reparameterization).}
The pathwise coupling in the hypothesis gives, for all $s$,
\begin{equation}
f(s)=\hat f(\phi(s)),\qquad
\phi:=\hat f^{-1}\circ f,\qquad \hat s=\phi(s),
\end{equation}
and $\phi$ is a diffeomorphism of $\R^n$ by the regularity conditions of
Section~\ref{sec:setup}.
For fixed $\eta$, the same coupling gives the following identity almost surely, hence for
all $s$ by continuity and full support,
\begin{equation}
\pi(s,\eta)=\hat\pi(\phi(s),\eta)\qquad\text{for all }s.
\end{equation}
Differentiating in $s$ and applying the chain rule,
\begin{equation}
\frac{\partial\pi}{\partial s}(s,\eta)
=\frac{\partial\hat\pi}{\partial\hat s}(\phi(s),\eta)\,
\frac{\partial\phi}{\partial s}(s)
=J_{\hat\pi}(\phi(s))\,J_{\phi}(s),
\end{equation}
and $J_{\phi}(s)=\big(J_{\phi^{-1}}(\phi(s))\big)^{-1}$, so with
$P(\hat s):=J_{\phi^{-1}}(\hat s)$,
\begin{equation}\label{eq:reparam}
J_{\hat\pi}(\phi(s))=J_\pi(s)\,P(\phi(s)).
\end{equation}

\medskip
\noindent\emph{Step 2 (nondegenerate rows).}
Fix $i\in[d_a]$.
Let $s^{(1)},\dots,s^{(|I_i|)}$ be as in the proposition, so the rows
$(J_\pi(s^{(l)}))_{i,\cdot}$, $l=1,\dots,|I_i|$, are linearly independent.
Write $\hat s^{(l)}=\phi(s^{(l)})$ and let $M$ be the matrix in the hypothesis, with
$\nz(M)=\nz(P)$.
The assumed inclusion is, for each $l$,
\begin{equation}\label{eq:prop-incl}
\nz\big((J_\pi(s^{(l)})M)_{i,\cdot}\big)\subseteq
\nz\big((J_{\hat\pi}(\hat s^{(l)}))_{i,\cdot}\big).
\end{equation}
Write $S_i:=\nz\big((J_{\hat\pi})_{i,\cdot}\big)$ for the functional row support
(Section~\ref{sec:setup}) and $S^{(l)}:=\nz\big((J_{\hat\pi}(\hat s^{(l)}))_{i,\cdot}\big)$.
If a row vector $v\in\R^n$ satisfies $\nz(v)\subseteq T$, then
\begin{equation}
v=\sum_{r\in T}v_r e_r\in\mathrm{span}\{e_r:r\in T\}.
\end{equation}
By \eqref{eq:prop-incl}, $\nz\big((J_\pi(s^{(l)})M)_{i,\cdot}\big)\subseteq S^{(l)}$.
Each $S^{(l)}$ is contained in $S_i$: an entry of $J_{\hat\pi}$ that is nonzero at
$\hat s^{(l)}$ is nonzero on a neighborhood of $\hat s^{(l)}$ by continuity, and this
neighborhood has positive probability because $\hat s=\phi(s)$ has a positive density on
$\R^n$ (Appendix~\ref{app:nondeg}).
Applying the display above with $v=(J_\pi(s^{(l)})M)_{i,\cdot}$ gives
\begin{equation}\label{eq:row-span}
(J_\pi(s^{(l)})M)_{i,\cdot}\in
\mathrm{span}\{e_r:r\in S_i\},\qquad l=1,\dots,|I_i|.
\end{equation}

\medskip
\noindent\emph{Step 3 (from row span to every row of $M$ on $I_i$).}
Fix $j\in I_i$.
By definition of $I_i$, coordinate $k\notin I_i$ implies
$(J_\pi(s))_{i,k}=0$ almost surely, hence everywhere by continuity and full support,
and in particular $(J_\pi(s^{(l)}))_{i,k}=0$ for every $l$.
Thus each row $v^{(l)}:=(J_\pi(s^{(l)}))_{i,\cdot}$ lies in
$V:=\mathrm{span}\{e_k:k\in I_i\}$.
The $|I_i|$ rows $v^{(1)},\dots,v^{(|I_i|)}$ are linearly independent and contained in
the $|I_i|$-dimensional space $V$, so they form a basis of $V$.
In particular $e_j^{\top}\in V$, and there exist $\beta_1,\dots,\beta_{|I_i|}$ with
\begin{equation}
e_j^{\top}=\sum_{l=1}^{|I_i|}\beta_l\,v^{(l)}
=\sum_{l=1}^{|I_i|}\beta_l\,(J_\pi(s^{(l)}))_{i,\cdot}.
\end{equation}
Right-multiplying by $M$ and using \eqref{eq:row-span},
\begin{equation}
M_{j,\cdot}=e_j^{\top}M
=\sum_{l=1}^{|I_i|}\beta_l\,(J_\pi(s^{(l)})M)_{i,\cdot}
\in\mathrm{span}\{e_r:r\in S_i\},
\end{equation}
because each term lies in $\mathrm{span}\{e_r:r\in S_i\}$ and that set is a linear
subspace. Record
\begin{equation}\label{eq:M-rows}
M_{j,\cdot}\in\mathrm{span}\{e_r:r\in S_i\}.
\end{equation}

\medskip
\noindent\emph{Step 4 (a permutation common to all action dimensions).}
Fix a point $\hat s^{*}\in\R^n$, the same for every $i$.
The matrix $P(\hat s^{*})$ is invertible, so $\det P(\hat s^{*})\neq0$.
By the Leibniz formula,
\begin{equation}
\det P(\hat s^{*})
=\sum_{\sigma\in\mathfrak S_n}\mathrm{sgn}(\sigma)\,
\prod_{k=1}^n P_{k,\sigma(k)}(\hat s^{*}),
\end{equation}
where $\mathfrak S_n$ is the set of permutations of $[n]$.
Some summand is nonzero; fix one such $\rho\in\mathfrak S_n$, chosen once for all $i$, so that
\begin{equation}
P_{j,\rho(j)}(\hat s^{*})\neq0\qquad\text{for all }j\in[n].
\end{equation}
By continuity of $P$, these entries are nonzero on a neighborhood of $\hat s^{*}$, and this
neighborhood has positive probability because $\hat s=\phi(s)$ has a positive density on $\R^n$.
Hence $(j,\rho(j))\in\nz(P)=\nz(M)$, that is,
\begin{equation}
M_{j,\rho(j)}\neq0\qquad\text{for all }j\in[n].
\end{equation}

\medskip
\noindent\emph{Step 5 (conclusion).}
Fix $j\in I_i$. Vectors in $\mathrm{span}\{e_r:r\in S_i\}$ vanish off $S_i$, so
\eqref{eq:M-rows} gives $M_{j,k}=0$ for $k\notin S_i$; in particular,
$M_{j,\rho(j)}=0$ if $\rho(j)\notin S_i$.
Step~4 gives $M_{j,\rho(j)}\neq0$, hence
\begin{equation}
\rho(j)\in S_i=\nz\big((J_{\hat\pi})_{i,\cdot}\big).
\end{equation}
This holds for every $j\in I_i$, so $\rho(I_i)\subseteq S_i$ and
\begin{equation}
\|\nz((J_{\hat\pi})_{i,\cdot})\|_0
=|S_i|\geq|\rho(I_i)|=|I_i|=\|\nz((J_\pi)_{i,\cdot})\|_0,
\end{equation}
where $|\rho(I_i)|=|I_i|$ because $\rho$ is a bijection of $[n]$.
Step~4 does not use $i$, so the same $\rho$ serves every action dimension~$i$.
\hfill$\square$

\subsection{Proof of Theorem~\ref{thm:action}}
\noindent\textbf{Theorem~\ref{thm:action}.}
Consider two observationally equivalent models as in
Proposition~\ref{prop:generalist} and assume support sparsity
\eqref{eq:sparsity}. Then there is a permutation $\rho$ of $[n]$ such that
$\nz(J_{\hat\pi})=\nz(J_\pi)P_\rho$ and
$\nz((J_{\hat\pi})_{i,\cdot})=\rho(I_i)$ for every $i\in[d_a]$.
\par\noindent\textit{Proof.}
\medskip
\noindent\emph{Step 1 (one-sided row supports).}
Apply Steps 1 to 5 of the proof of Proposition~\ref{prop:generalist} to each $i\in[d_a]$,
with the same $\hat s^{*}$ and the same $\rho$ in Step~4. This gives one permutation $\rho$
of $[n]$, common to all $i$, such that
\begin{equation}
\rho(I_i)\subseteq\nz\big((J_{\hat\pi})_{i,\cdot}\big),
\end{equation}
hence
\begin{equation}\label{eq:row-lower-i}
\|\nz((J_{\hat\pi})_{i,\cdot})\|_0\;\geq\;|\rho(I_i)|=|I_i|
=\|\nz((J_\pi)_{i,\cdot})\|_0 .
\end{equation}

\medskip
\noindent\emph{Step 2 (global one-sided bound).}
Each index pair $(i,j)\in\nz(J_\pi)$ satisfies $j\in I_i$, so
\begin{equation}
\|\nz(J_\pi)\|_0=\sum_{i=1}^{d_a}\|\nz((J_\pi)_{i,\cdot})\|_0
=\sum_{i=1}^{d_a}|I_i|.
\end{equation}
Similarly,
\begin{equation}
\|\nz(J_{\hat\pi})\|_0=\sum_{i=1}^{d_a}\|\nz((J_{\hat\pi})_{i,\cdot})\|_0.
\end{equation}
Summing \eqref{eq:row-lower-i} over $i$ gives
\begin{equation}\label{eq:row-lower}
\|\nz(J_{\hat\pi})\|_0\;\geq\;\|\nz(J_\pi)\|_0 .
\end{equation}

\medskip
\noindent\emph{Step 3 (sparsity forces equalities).}
Support sparsity \eqref{eq:sparsity} gives
\begin{equation}
\|\nz(J_{\hat\pi})\|_0\leq\|\nz(J_\pi)\|_0.
\end{equation}
Together with \eqref{eq:row-lower},
\begin{equation}
\|\nz(J_{\hat\pi})\|_0=\|\nz(J_\pi)\|_0.
\end{equation}
If some row inequality in \eqref{eq:row-lower-i} were strict, summing would make
\eqref{eq:row-lower} strict, a contradiction.
Therefore \eqref{eq:row-lower-i} holds with equality for every $i$, and with
$\rho(I_i)\subseteq\nz((J_{\hat\pi})_{i,\cdot})$ this implies
\begin{equation}\label{eq:row-eq}
\nz\big((J_{\hat\pi})_{i,\cdot}\big)=\rho(I_i)\qquad\text{for each }i\in[d_a].
\end{equation}

\medskip
\noindent\emph{Step 4 (support-level equivalence).}
For every $i\in[d_a]$ and $j\in[n]$, the definition of $I_i$, the bijectivity of
$\rho$, and \eqref{eq:row-eq} give
\begin{equation}\label{eq:iff-support}
(i,j)\in\nz(J_\pi)
\iff j\in I_i
\iff \rho(j)\in\rho(I_i)=\nz\big((J_{\hat\pi})_{i,\cdot}\big)
\iff (i,\rho(j))\in\nz(J_{\hat\pi}).
\end{equation}

\medskip
\noindent\emph{Step 5 (matrix support).}
By definition of $P_\rho$, $(i,k)\in\nz(J_\pi)P_\rho$ if and only if
$(i,\rho^{-1}(k))\in\nz(J_\pi)$.
For $k=\rho(j)$ this is equivalent to $(i,j)\in\nz(J_\pi)$, i.e.\ to
$(i,\rho(j))\in\nz(J_{\hat\pi})$ by \eqref{eq:iff-support}.
Hence
\begin{equation}
\nz(J_{\hat\pi})=\nz(J_\pi)P_\rho.
\end{equation}
\hfill$\square$

\subsection{Proof of Theorem~\ref{thm:state}}
\noindent\textbf{Theorem~\ref{thm:state}.}
Under the hypotheses of Theorem~\ref{thm:action}, with the same permutation
$\rho$, there exists an invertible map $h$ such that
$C=\hat s_{\rho(\IA)}=h(s_{\IA})$
(equivalently $\Ct=\hat s_{t,\rho(\IA)}=h(s_{t,\IA})$ in Section~\ref{sec:setup}).
\par\noindent\textit{Proof.}
Assume \eqref{eq:row-eq} and fix the permutation $\rho$ from
Theorem~\ref{thm:action}.

\medskip
\noindent\emph{Step 1 (a zero block of $P$).}
Fix any $j\in\IA$ and $r\in[n]\setminus\IA$.
Since $\IA=\bigcup_i I_i$, there exists an action dimension
$i\in[d_a]$ such that $j\in I_i$.
Equation~\ref{eq:M-rows} from the proof of
Proposition~\ref{prop:generalist}, together with
\eqref{eq:row-eq}, gives
\[
M_{j,\cdot}
\in \mathrm{span}\{e_k:k\in S_i\}
=
\mathrm{span}\{e_k:k\in\rho(I_i)\},
\]
where $S_i=\nz((J_{\hat\pi})_{i,\cdot})$.
Because $r\notin\IA$, we have $r\notin I_i$ and hence
$\rho(r)\notin\rho(I_i)$.
Every vector in the displayed span vanishes outside
$\rho(I_i)$, so
\[
M_{j,\rho(r)}=0.
\]
By the assumed equality of functional supports
$\nz(M)=\nz(P)$, this implies that
$P_{j,\rho(r)}=0$ almost surely.
Since $P=J_{\phi^{-1}}$ is continuous and $\hat s$ has a
density positive on $\R^n$, the entry must vanish everywhere:
otherwise, continuity would make it nonzero on a neighborhood
of positive probability.
As $j$ and $r$ were arbitrary, we obtain
\begin{equation}\label{eq:block-zero}
P_{j,\rho(r)}(\hat s)=0
\qquad
\text{for all }\hat s\in\R^n,\quad
j\in\IA,\quad r\in[n]\setminus\IA.
\end{equation}

\medskip
\noindent\emph{Step 2 (block triangular form).}
Order rows of $P$ as $(\IA,[n]\setminus\IA)$ and columns as
$(\rho(\IA),\rho([n]\setminus\IA))$.
Then \eqref{eq:block-zero} is exactly $P_{j,\rho(r)}=0$ for $j\in\IA$ and
$r\in[n]\setminus\IA$, i.e.\ the upper-right block of $P$ vanishes:
\begin{equation}
P=\begin{pmatrix}P_{11}&0\\ P_{21}&P_{22}\end{pmatrix},
\qquad
P_{11}\in\R^{|\IA|\times|\rho(\IA)|},\quad
P_{22}\in\R^{(n-|\IA|)\times(n-|\rho(\IA)|)}.
\end{equation}
Because $P$ is invertible,
\begin{equation}
\det P=\det P_{11}\det P_{22}\neq0,
\end{equation}
so $P_{11}$ and $P_{22}$ are invertible.
For an invertible block-triangular matrix,
\begin{equation}\label{eq:Pinv-block}
P^{-1}=\begin{pmatrix}P_{11}^{-1}&0\\
-P_{22}^{-1}P_{21}P_{11}^{-1}&P_{22}^{-1}\end{pmatrix},
\end{equation}
which is checked by direct multiplication.
Since $P^{-1}=J_{\phi}(s)$, the upper-right block of $P^{-1}$ is zero, so
\begin{equation}
\frac{\partial\hat s_{\rho(j)}}{\partial s_r}=0
\qquad\text{for }j\in\IA,\ r\in[n]\setminus\IA;
\end{equation}
\eqref{eq:block-zero} gives
\begin{equation}
\frac{\partial s_j}{\partial\hat s_{\rho(r)}}=0
\qquad\text{for the same index pairs.}
\end{equation}

\medskip
\noindent\emph{Step 3 (block-wise functional dependence).}
Because $\phi$ and $\phi^{-1}$ are diffeomorphisms on $\R^n$, if
$\partial\hat s_{\rho(j)}/\partial s_r\equiv0$ for $j\in\IA$ and $r\in[n]\setminus\IA$,
then $\hat s_{\rho(j)}$ does not depend on $s_r$ as a function on $\R^n$; likewise
$\partial s_j/\partial\hat s_{\rho(r)}\equiv0$ implies $s_j$ does not depend on
$\hat s_{\rho(r)}$.
Thus there exist continuously differentiable maps $h$ and $\tilde h$ such that
\begin{equation}
\hat s_{\rho(\IA)}=h(s_{\IA}),\qquad s_{\IA}=\tilde h(\hat s_{\rho(\IA)}).
\end{equation}
On the image of $h$, $\tilde h\circ h$ is the identity on $s_{\IA}$ and
$h\circ\tilde h$ is the identity on $\hat s_{\rho(\IA)}$, so $h$ is invertible with
inverse $\tilde h$.
The Jacobian of $h$ is the upper-left block of $P^{-1}$, hence
\begin{equation}
J_h=P_{11}^{-1}.
\end{equation}

\medskip
\noindent\emph{Step 4 (compact state).}
By \eqref{eq:row-eq}, $\hat I_i=\nz((J_{\hat\pi})_{i,\cdot})=\rho(I_i)$ for every $i$, so $\hat{I}_{\mathrm{A}}=\bigcup_i\rho(I_i)=\rho(\IA)$ because $\rho$ is a bijection. Step~3 therefore gives
\begin{equation}
C=\hat s_{\hat{I}_{\mathrm{A}}}=\hat s_{\rho(\IA)}=h(s_{\IA}),
\end{equation}
which is Theorem~\ref{thm:state} with time subscript $t$ restored ($\Ct=\hat s_{t,\rho(\IA)}$).
\hfill$\square$

\section{Supplementary discussions}

\subsection{Detailed discussion on main conditions}
\label{app:nondeg}
The assumptions are those in Proposition~\ref{prop:generalist}. They require that
each action dimension $i$ has enough variation in $(J_\pi(s))_{i,\cdot}$ across
$|I_i|$ states to span the true parents $I_i$, and that reparameterization through
$\phi$ cannot hide a true dependency in $\hat\pi$ by cancellation at every such
state. Both parts are mild in the usual smooth, nonparametric regime.

\emph{Span.}
For each action dimension $i$, the assumption requires
$|I_i|$ states at which the rows
$(J_\pi(s^{(l)}))_{i,\cdot}$ are linearly independent.
Equivalently, these rows must span
$\mathrm{span}\{e_j:j\in I_i\}$.
This requirement concerns variation in the mechanism's
derivatives and does not follow from smoothness or from
a positive state density.
For example, if $\pi_i(s)=s_1+s_2$, then
$I_i=\{1,2\}$ but every Jacobian row equals
$(1,1,0,\ldots,0)$, so the condition fails regardless
of the state distribution.
More generally, mechanisms of the form
$\pi_i(s)=g(s_1+s_2)$ have collinear Jacobian rows.
The proposition therefore assumes the required spanning
states explicitly.
It requires their existence, rather than asserting that
$|I_i|$ randomly sampled states satisfy the condition
with probability one.

\emph{Support inclusion.} The identity \eqref{eq:reparam} implies that rows of
$J_\pi(s)M$ with $\nz(M)=\nz(P)$ typically lie inside the corresponding rows of
$J_{\hat\pi}(\hat s)$. The inclusion assumed in Proposition~\ref{prop:generalist}
must hold at the $|I_i|$ states used for the span argument; it excludes
measure-zero coincidences where partial derivatives cancel on all of those states
simultaneously.

\emph{Functional row support.} In the proof of Proposition~\ref{prop:generalist},
$S^{(l)}:=\nz\big((J_{\hat\pi}(\hat s^{(l)}))_{i,\cdot}\big)$ is compared with the
functional row support $S_i=\nz\big((J_{\hat\pi})_{i,\cdot}\big)$ from
Section~\ref{sec:setup}. If $(J_{\hat\pi}(\hat s^{(l)}))_{i,k}\neq 0$, then, since $\hat\pi$
is continuously differentiable, $J_{\hat\pi}(\hat s)_{i,k}\neq 0$ on a neighborhood of
$\hat s^{(l)}$, so $k\in S_i$ because $\hat s=\phi(s)$ has a positive density on $\R^n$
(Section~\ref{sec:setup}).

\section{Evaluation protocol}
\label{app:protocol}

\paragraph{Estimands.} Block recovery is assessed with a ridge probe from $\Ct$ to the true latent variables of the synthetic process, fitted and reported on disjoint episodes, and reported as held-out $R^2$ for the action-relevant and for the nuisance variables. The same probe from the true $s_{t,\IA}$ to the nuisance variables gives the oracle reference. On the robot benchmarks, routing mass is visualized in Figure~\ref{fig:router-mass-stackcube}, and on four RoboCasa365 tasks, closed-loop success with only the routed tokens is measured in Table~\ref{tab:robocasa-context-ablation}.

\paragraph{Synthetic generator.}
The simulator implements \eqref{eq:dgp-obs}--\eqref{eq:dgp-action} and adds
a latent transition and a hidden regime that scales the action mechanism and the transition, to stress nuisance coupling;
those extras are not part of the identification assumptions in
Section~\ref{sec:setup}. Latent $s_t\in\R^{20}$ with $\IA=\{1,\dots,4\}$. A hidden regime
$c_t\in\{0,1,2\}$ follows a Markov chain with Dirichlet-random transition matrix
and initial law. Action $a_t=M_{c_t}\,s_{t,\IA}+0.05\,\eta_t$, where
$M_c=(1+0.15(c-1))M$ with Gaussian $M\in\R^{8\times4}$; transition
$s_{t+1}=\tanh\big((1+0.15\,c_t)(W_s s_t+W_a a_t)\big)+0.05\,\epsilon_{t+1}$ with a
dense Gaussian $W_s$ (diagonal $0.85$) and $W_a$ feeding the action back into
$\IA$ only, so nuisance factors are coupled to relevant ones through $W_s$;
observation $o_t=\tanh(W_{\mathrm{mix}}s_t)+0.05\,\xi_t$ with block-diagonal Gaussian
$W_{\mathrm{mix}}\in\R^{100\times20}$ (five observation coordinates per latent factor,
no cross-block entries). The environment suite draws $e_t\in[E]$
i.i.d.\ at every step and multiplies the nuisance coordinates of $s_{t+1}$ by a
per-environment factor in $[0.5,1.5]$; the action family is unchanged across $E$.
The regime thus modulates behavior and transition without changing the four-factor
action support. It is not supplied to the estimator, and the fitted single-frame head
learns a marginalized predictor rather than an explicitly recovered conditional policy.
Simulator streams are seeded so that exogenous noise is paired across $E$.

\paragraph{Synthetic estimators and metrics.}
The encoder is an MLP $100\to256\to20$ with $\tanh$ followed by a non-affine LayerNorm,
the action head is linear $20\to8$, and we use Adam with batch size $512$. Development seeds $900,901$
select learning rate from $\{3\cdot10^{-4},10^{-3},3\cdot10^{-3}\}$ and epochs
from $\{80,160\}$, with $320$ epochs tried only if no candidate meets the
predeclared development targets.
All six candidates meet the development-mean targets; minimum selected-state
action MSE selects learning rate $10^{-3}$ and $160$ epochs. This setting is shared
by all baselines and evaluated once on seeds $200$--$204$, held out from development tuning.
The current action is
a target, never an encoder input. Dense, random and $\lambda{=}0$ readouts share
weights, and $E=1$ is reused across tables: $70$ entries come from $50$ saved
estimators, all reloaded for metric replay.
CWAM penalty: mean over latent coordinates of the $\ell_2$ norm of the
corresponding head column, weight $0.01$. VAE (Table~\ref{tab:synthetic}): Gaussian
posterior, MLP decoder, KL weight $10^{-3}$, head sparsity weight $0$; an optional
VAE~+~$\ell_1$ baseline ($0.05$ times the mean absolute head weight) is trained in
the same matrix but omitted from the table.
Support: the four coordinates with the largest mean absolute head weight; random
matched: four uniformly random coordinates. Probes: ridge ($\alpha=1$) fitted on
$250$ calibration episodes, $R^2$ averaged over target dimensions on $250$
disjoint reporting episodes. Sel./full and Compl./full apply the fitted head to
$\hat s_t$ with the complement (resp.\ $\Ct$) set to zero on the reporting
episodes.

\paragraph{Synthetic splits, readouts, and the oracle reference.} Each fit uses $2{,}000$ training episodes and $500$ held-out episodes of length $128$. The hidden regime $c_t$ enters the action mechanism and the transition in every reported run. The dense readout applies the probes to all $20$ coordinates of $\hat s_t$. Random matched and CWAM without sparsity are taken from the same fit, trained with head sparsity weight $\lambda=0$, and CWAM without sparsity keeps its four coordinates with the largest mean absolute head weight. CWAM with sparsity and the standard MLP VAE are separate fits, and $\Ct$ is selected from the head of the VAE in the same way. Every top-$4$ selection uses the true size $|\IA|=4$. Because the transition couples the nuisance variables to the action-relevant ones, the true block already predicts the nuisance variables with $R^2>0$. The oracle reference $R^2(s_{t,\IA}\to\mathrm{nuis})$ measures this coupling with the same linear probe and splits, and $\Delta_{\mathrm{nuis}}$ subtracts it from the nuisance $R^2$ of $\Ct$ seed by seed before averaging. The reference is specific to this simulator and is computed from the true coordinates $s_{t,\IA}$. Since $\Ct=\hat s_{t,\hat{I}_{\mathrm{A}}}$ passes through a nonlinear encoder and need not align with $s_{t,\IA}$ coordinate by coordinate, a linear probe from $\Ct$ can land slightly above or below the reference even when block recovery is good, so $\Delta_{\mathrm{nuis}}$ can be slightly negative. The environment suite uses $E\in\{1,2,3,5\}$, and neither $c_t$ nor $e_t$ is an input to the encoder. Tables report the mean and the population standard deviation over seeds, with ratios computed per seed. Theorem~\ref{thm:state} identifies the action-relevant latent variables up to a block-wise invertible transformation, so we measure block-level recovery with linear-probe $R^2$ and do not report the MCC.

\section{Hidden environments (synthetic)}
\label{app:synthetic-env}

The environment index $e_t\in[E]$ in Appendix~\ref{app:protocol} rescales nuisance
transition coordinates while the action mechanism is fixed. Table~\ref{tab:synthetic-env}
reports probes of the dense readout and of CWAM with sparsity as $E$ grows (mean $\pm$ std over five
seeds); metrics match Table~\ref{tab:synthetic}.

\begin{table}[t]
  \caption{Block recovery by CWAM with sparsity does not degrade as hidden environments are added. At every $E$ it decodes the action-relevant block as well as the dense readout, which alone decodes nuisance variables beyond the true block. Each hidden environment rescales the nuisance variables in the transition, with the action mechanism fixed across $E$, and is never an encoder input. Metrics as in Table~\ref{tab:synthetic}, whose fits give the row $E=1$. Mean $\pm$ population std over five seeds.}
  \label{tab:synthetic-env}
  \centering
  {\footnotesize\setlength{\tabcolsep}{3.5pt}%
  \resizebox{0.75\linewidth}{!}{%
  \begin{tabular}{@{}c@{\hspace{0.4em}}cccc@{}}
\toprule
 & \multicolumn{2}{c}{Dense (20D readout)} & \multicolumn{2}{c}{CWAM with sparsity (4D)} \\
\cmidrule(lr){2-3}\cmidrule(lr){4-5}
$E$ & Relevant $R^2\uparrow$ & $\Delta_{\mathrm{nuis}}\downarrow$ & Relevant $R^2\uparrow$ & $\Delta_{\mathrm{nuis}}\downarrow$ \\
\midrule
1 & 0.9961 $\pm$ 0.0015 & 0.646 $\pm$ 0.115 & 0.9957 $\pm$ 0.0014 & 0.039 $\pm$ 0.093 \\
2 & 0.9961 $\pm$ 0.0016 & 0.640 $\pm$ 0.110 & 0.9961 $\pm$ 0.0016 & -0.004 $\pm$ 0.003 \\
3 & 0.9961 $\pm$ 0.0017 & 0.677 $\pm$ 0.109 & 0.9961 $\pm$ 0.0017 & -0.004 $\pm$ 0.003 \\
5 & 0.9961 $\pm$ 0.0017 & 0.683 $\pm$ 0.114 & 0.9961 $\pm$ 0.0017 & -0.003 $\pm$ 0.002 \\
\bottomrule
\end{tabular}

  }}
\end{table}

\section{RoboCasa365 per-task success}
\label{app:learned-success}

The RoboCasa365 panel of Figure~\ref{fig:robocasa-libero-radar} reports the mean success within each foundational skill family over the $17$ evaluation tasks. Table~\ref{tab:robocasa-per-task} lists the per-task success rates behind those family means, together with the macro average over all tasks.

\paragraph{Robot evaluation.} The matched Cosmos~3 baseline and CWAM are post-trained on the same demonstrations with the same recipe and optimization, and CWAM differs only by the action-relevance router and $\mathcal{L}_{\mathrm{sparse}}$ (Section~\ref{sec:method}), so the baseline has no router. RoboCasa365 is evaluated on $17$ tasks with $100$ episodes per task, LIBERO on $40$ tasks in four suites with $100$ episodes per task, and the Franka stack-cube task with $50$ trials per policy. For the context ablation in Table~\ref{tab:robocasa-context-ablation}, the CWAM checkpoint is fixed, and only the visual tokens that the action queries attend to are changed at evaluation, with no refitting and with the episode keys of the main evaluation. C-only keeps the routed top-$k$ tokens that define $\Ct$ in \eqref{eq:ct-tokens} and zeroes the other visual context to the action head. Random matched keeps the same number of visual tokens, drawn uniformly at random from the clean context. The four ablation tasks are those with the highest CWAM success in Table~\ref{tab:robocasa-per-task}.

\begin{table}[t]
  \caption{CWAM raises success on $13$ of the $17$ RoboCasa365 tasks and the macro average from $0.24$ to $0.43$. Each cell is the success rate over $100$ episodes. Tasks are grouped by skill family, and each point in Figure~\ref{fig:robocasa-libero-radar}(a) averages the tasks of one family. Baseline and CWAM are as in Figure~\ref{fig:robocasa-libero-radar}.}
  \label{tab:robocasa-per-task}
  \centering\small
  \setlength{\tabcolsep}{6pt}
  \newcommand{\robopct}[1]{#1}

\begin{tabular}{@{}llcc@{}}
  \toprule
  \textbf{Foundational skill family} & \textbf{Task} & \textbf{Baseline} & \textbf{CWAM} \\
  \midrule
  Pick and place
  & PickPlaceCtrToCab & \robopct{.31} & \robopct{.43} \\
  & PickPlaceCtrToStove & \robopct{.22} & \robopct{.45} \\
  & PickPlaceDrawerToCtr & \robopct{.15} & \robopct{.07} \\
  & PickPlaceSinkToCtr & \robopct{.24} & \robopct{.74} \\
  & PickPlaceToasterToCtr & \robopct{.28} & \robopct{.72} \\
  \addlinespace
  Opening / closing doors
  & CloseFridge & \robopct{.14} & \robopct{.61} \\
  & CloseToasterOvenDoor & \robopct{.51} & \robopct{.37} \\
  & OpenCabinet & \robopct{.31} & \robopct{.24} \\
  \addlinespace
  Closing / opening lids
  & CloseBlenderLid & \robopct{.02} & \robopct{.03} \\
  & OpenStandMixerHead & \robopct{.30} & \robopct{.69} \\
  \addlinespace
  Pressing buttons
  & TurnOnMicrowave & \robopct{.27} & \robopct{.26} \\
  & TurnOnElectricKettle & \robopct{.35} & \robopct{.45} \\
  \addlinespace
  Opening / closing drawers
  & OpenDrawer & \robopct{.25} & \robopct{.70} \\
  \addlinespace
  Sliding racks
  & SlideDishwasherRack & \robopct{.32} & \robopct{.61} \\
  \addlinespace
  Twisting knobs
  & TurnOffStove & \robopct{.05} & \robopct{.18} \\
  \addlinespace
  Turning levers
  & TurnOnSinkFaucet & \robopct{.25} & \robopct{.37} \\
  \addlinespace
  Insertion
  & CoffeeSetupMug & \robopct{.10} & \robopct{.37} \\
  \midrule
  \multicolumn{2}{@{}l}{\textbf{Macro (unweighted mean over $17$ tasks)}}
  & \robopct{.24} & \robopct{.43} \\
  \bottomrule
\end{tabular}

\end{table}

\section{Cosmos~3 post-training setup}
\label{app:posttrain}

This appendix records the matched baseline and CWAM post-training recipe used for
Section~\ref{sec:exp-main-learned}. The architectural difference is the
action-relevance router and $\mathcal{L}_{\mathrm{sparse}}$ (Appendix~\ref{app:posttrain-router});
optimization, data, packing, and rectified-flow losses are otherwise identical
(Appendix~\ref{app:posttrain-base}).

\subsection{Training stages and served checkpoint}
\label{app:posttrain-stages}

We warm-start from the public \texttt{nvidia/Cosmos3-Nano} checkpoint converted to
distributed checkpoint (DCP) format.

\textbf{Stage~1 (router initialization).}
CWAM post-trains on RoboCasa365 for $10{,}000$ optimizer steps with the
action-relevance router enabled and freshly initialized; the rectified-flow
objective, data packing, and shared hyperparameters match
Table~\ref{tab:posttrain-opt} and Appendix~\ref{app:posttrain-base}.

\textbf{Stage~2 (continue).}
Training resumes from step $10{,}000$ and runs to $50{,}000$ steps with a flat
LR multiplier $0.3871$ (the learning rate reached at the end of Stage~1;
Table~\ref{tab:posttrain-opt}); optimizer scheduler state is not resumed.
The matched baseline follows the same two stages and data recipe without the router module (hence no $\mathcal{L}_{\mathrm{sparse}}$).

Training runs on one $8{\times}$H200 node (FSDP shard degree $8$), bf16, selective
activation checkpointing on FMHA, and precomputed Wan~2.2 video latents bundled
with the RoboCasa365 training shards.

\subsection{Demonstration data recipe (RoboCasa365)}
\label{app:posttrain-data}

\paragraph{Corpus.}
Processed LeRobot-style RoboCasa365 demonstrations from the Cosmos~3
pipeline.

\paragraph{Windowing and modalities.}
Policy mode at $20$\,Hz: \texttt{chunk\_length=16} frames per training window; the
leading frame is observed and later video tokens are noised under the rectified-flow
objective. Observations use \texttt{camera\_mode=concat\_view} (third-person and
wrist, each $256{\times}256$, concatenated to $256{\times}512$ before the model
canvas). Language follows \texttt{caption\_mode=policy} with JSON-formatted prompts.
Actions are \texttt{frame\_wise\_relative} rot6d (10D: position, rotation, gripper)
with \texttt{quantile\_rot} normalization and embodiment tag
\texttt{robocasa365}. Classifier-free guidance dropout on conditioning is
$0.1$ on train and $0$ on val.

\subsection{Closed-loop task protocols}
\label{app:posttrain-eval}

\paragraph{RoboCasa365 (17 tasks).}
We evaluate the \emph{target} split on the $17$ tasks listed in
Table~\ref{tab:robocasa-per-task}:
\texttt{CloseBlenderLid},\texttt{CloseFridge}, \texttt{CloseToasterOvenDoor}, \texttt{CoffeeSetupMug}, \texttt{OpenCabinet}, \texttt{OpenDrawer}, \texttt{OpenStandMixerHead}, \texttt{PickPlaceCounterToCabinet}, \texttt{PickPlaceCounterToStove}, \texttt{PickPlaceDrawerToCounter}, \texttt{PickPlaceSinkToCounter}, \texttt{PickPlaceToasterToCounter}, \texttt{SlideDishwasherRack}, \texttt{TurnOffStove}, \texttt{TurnOnElectricKettle}, \texttt{TurnOnMicrowave}, \texttt{TurnOnSinkFaucet}. Each task: \textbf{$100$ episodes}, base seed
\textbf{$42$} (paired keys across intervention conditions in
Appendix~\ref{app:learned-success}), \textbf{$400$} environment steps cap,
\textbf{$12$}-step open-loop action horizon (replan after each chunk).

\paragraph{LIBERO (40 tasks).}
Closed-loop evaluation uses the \textbf{LIBERO-post-trained} baseline and CWAM
checkpoint (Appendix~\ref{app:posttrain-data}).
All LIBERO tasks are grouped into the four standard suites ($40$ tasks total);
per-suite success rates appear in Figure~\ref{fig:robocasa-libero-radar}(b).
Protocol details follow the Cosmos~3 LIBERO closed-loop stack (20\,Hz, quantile
rotation actions, concatenated views).

\subsection{Shared optimization hyperparameters}
\label{app:posttrain-opt}

Table~\ref{tab:posttrain-opt} lists settings shared by baseline and CWAM; router-only
fields are deferred to Appendix~\ref{app:posttrain-router}.

\begin{table}[t]
  \caption{Shared Cosmos~3 RoboCasa365 post-training hyperparameters (baseline and CWAM).}
  \label{tab:posttrain-opt}
  \centering\small
  \begin{tabular}{@{}ll@{}}
    \toprule
    \textbf{Setting} & \textbf{Value} \\
    \midrule
    Optimizer & FusedAdamW ($\beta_1{=}0.9$, $\beta_2{=}0.99$, $\eps{=}10^{-8}$) \\
    Base learning rate & $5\times 10^{-5}$ \\
    LR multipliers & $5.0$ on \texttt{action2llm}, \texttt{llm2action}, \texttt{action\_modality\_embed} \\
    Weight decay & $0.05$ \\
    Gradient clipping & $1.0$ (force finite) \\
    Precision & bf16 \\
    Parallelism & FSDP, 8 GPU shard / 1 replicate \\
    Grad accumulation & $2$ (Stages~1 and~2) \\
    Stage~1 schedule & linear warmup $500$ steps, cycle $16{,}000$, \texttt{max\_iter=10{,}000} \\
    Stage~2 schedule & flat multiplier $0.3871$ (LR at step $10{,}000$), cycle $90{,}000$, \texttt{max\_iter=50{,}000} \\
    Checkpointing & save every $500$ (Stage~1) / $10{,}000$ (Stage~2); EMA rate $0.1$ \\
    Rectified-flow loss scale & $10.0$ on generator (RoboCasa recipe override) \\
    Action loss weight & $10.0$ \\
    CFG dropout (train) & $0.1$ \\
    \bottomrule
  \end{tabular}
\end{table}

\section{Cosmos~3 base model and rectified-flow objective}
\label{app:posttrain-base}

\paragraph{Backbone.}
Cosmos~3-Nano: Qwen3-VL-8B-Instruct reasoner inside a mixture-of-transformers (MoT)
generator with \texttt{joint\_attn\_implementation=two\_way}, video generation enabled
(\texttt{vision\_gen=True}), and action generation enabled (\texttt{action\_gen=True}).
Diffusion expert loads pretrained weights; RoboCasa365 post-training \emph{reinitializes}
action adapters (\texttt{action2llm}, \texttt{llm2action}, \texttt{action\_modality\_embed},
\texttt{action\_pos\_embed}) from scratch while keeping the public video/reasoner stack.

\paragraph{Tokenization.}
Text: staged local Qwen3-VL processor (\texttt{QWEN3\_VL\_TOKENIZER\_PATH}). Video:
Wan~2.2 VAE latents ($48$ channels) with training-time exact encode durations
$\{17,61,73\}$ frames when an online VAE is used; production runs use precomputed
latents with metadata-only validation. Max packed tokens after packing:
$74{,}000$ for this recipe (override of the Nano default $45{,}056$).

\paragraph{Objective $\mathcal{L}_{\mathrm{WAM}}$.}
Joint rectified-flow training on noised future video tokens and the executed action
chunk, conditioned on clean history and language. Training-time distributions:
\texttt{waver} on video, \texttt{logitnormal} on image/action/sound branches;
\texttt{loss\_scale=10.0} on the generator path in the RoboCasa365 config;
\texttt{action\_loss\_weight=10.0}. Inference defaults (\texttt{rectified\_flow\_inference\_config}):
UniPC, $1000$ train timesteps, shift $1$. The baseline and CWAM share this loss;
CWAM adds $\mathcal{L}_{\mathrm{sparse}}$ only on the router readout
(Section~\ref{sec:method}).

\section{Action-relevance router}
\label{app:posttrain-router}

\paragraph{Module.}
When \texttt{action\_relevance\_enabled=True} , a residual router sits after the
generator tower: relation-aware attention over clean context indices
$\mathcal{J}_t$ (task + clean vision; noised future vision excluded), with routing
dimension $d_r{=}512$, $8$ heads, top-$k$ support $k{=}32$, and residual gate
initialized to $0.01$. Optimizer trains \texttt{action\_relevance} with LR multiplier
$5.0$; router weights are skipped when loading the public Nano checkpoint.

\paragraph{Sparsity schedule.}
$\lambda{=}0.01$ on $\mathcal{L}_{\mathrm{sparse}}$ (\eqref{eq:router-sparse}); not applicable to the matched baseline, which has no router. Sparsity loss activates at optimizer step
$1{,}000$ and anneals linearly over $4{,}000$ steps (matches the main-text ramp).
Gradients from $\mathcal{L}_{\mathrm{sparse}}$ update the shared tower and router;
the dense MoT visual$\to$action path remains active at training and default eval time
(Section~\ref{sec:method}).

\paragraph{Evaluation-only readouts.}
Closed-loop \texttt{selected} (C-only) and \texttt{random\_matched} conditions
(Section~\ref{sec:exp-ablations}, Appendix~\ref{app:learned-success}) intervene on the
\emph{fixed} $50\mathrm{k}$ policy without refitting; they are not alternate training
configs.
\end{document}